\newtheorem{theorem}{Theorem}
\newtheorem{lemma}{Lemma}
\newtheorem{definition}{Definition}
\newtheorem{assumption}{Assumption}
\newtheorem{remark}{Remark}
\newcommand{\brown}{\color{brown}}
\newcommand\abbrTSP{HPP\xspace}
\newcommand\abbrTSPPT{HPP-PT\xspace}
\newcommand\abbrRPT{RPT*\xspace}
\newcommand\abbrBRPT{F-RPT*\xspace}
\newcommand\abbrHATS{HATS\xspace}
\begin{document}


\title{RPT*: Global Planning with Probabilistic Terminals for Target Search in Complex Environments}



\author{Yunpeng Lyu$^{1*}$, Chao Cao$^{2*}$, Ji Zhang$^{2}$, Howie Choset$^{2}$, Zhongqiang Ren$^{1\dagger}$%
\thanks{$^{1}$Global College, Shanghai Jiao Tong University.}%
\thanks{$^{2}$Robotics Institute, Carnegie Mellon University.}
\thanks{$*$ Co-first Authors.}
\thanks{$\dagger$ Corresponding author. {\tt\footnotesize zhongqiang.ren@sjtu.edu.cn}.}
}

\maketitle

\graphicspath{{./figure/}}

		\thispagestyle{plain}
		\pagestyle{plain}
		\pagenumbering{arabic}

\begin{abstract}
		Routing problems such as Hamiltonian Path Problem (HPP), seeks a path to visit all the vertices in a graph while minimizing the path cost.
This paper studies a variant, HPP with Probabilistic Terminals (HPP-PT), where each vertex has a probability representing the likelihood that the robot’s path terminates there, and the objective is to minimize the expected path cost.
HPP-PT arises in target object search, where a mobile robot must visit all candidate locations to find an object, and prior knowledge of the object’s location is expressed as vertex probabilities.
While routing problems have been studied for decades, few of them consider uncertainty as required in this work.
The challenge lies not only in optimally ordering the vertices, as in standard HPP, but also in handling history dependency: the expected path cost depends on the order in which vertices were previously visited.
This makes many existing methods inefficient or inapplicable.
To address the challenge, we propose a search-based approach RPT* with solution optimality guarantees, which leverages dynamic programming in a new state space to bypass the history dependency and novel heuristics to speed up the computation.
Building on RPT*, we design a Hierarchical Autonomous Target Search (HATS) system that combines RPT* with either Bayesian filtering for lifelong target search with noisy sensors, or autonomous exploration to find targets in unknown environments.
Experiments in both simulation and real robot show that our approach can naturally balance between exploitation and exploration, thereby finding targets more quickly on average than baseline methods.
\end{abstract}

    \setlength{\textfloatsep}{2pt}
	\section{Introduction}\label{TSPPT:sec:intro}
	The well-known Hamiltonian Path Problem (\abbrTSP) seeks the shortest path that starts at a depot (start vertex) and visits every vertex in a graph exactly once.
This paper introduces a new variant of \abbrTSP called \abbrTSP with Probabilistic Terminals (\abbrTSPPT), where each vertex is assigned a probability representing the likelihood that the robot’s path will terminate there. The objective is to minimize the expected path length.
Our motivation for \abbrTSPPT comes from target search with a mobile robot.
Imagine one or more target objects, such as a key, left somewhere in a building, with the owner only vaguely recalling which rooms he had visited. In this setting, each possible location (e.g., a room or a table) is represented as a vertex in the \abbrTSP graph, with an associated probability indicating the likelihood of finding the target there.
The robot then plans its route by optimizing the visiting order of the vertices to minimize the expected path length. If the target is found at a vertex, the robot can terminate its search early; otherwise, it continues until all vertices have been visited.

\begin{figure}[t]
	\centering
	\includegraphics[width=\linewidth]{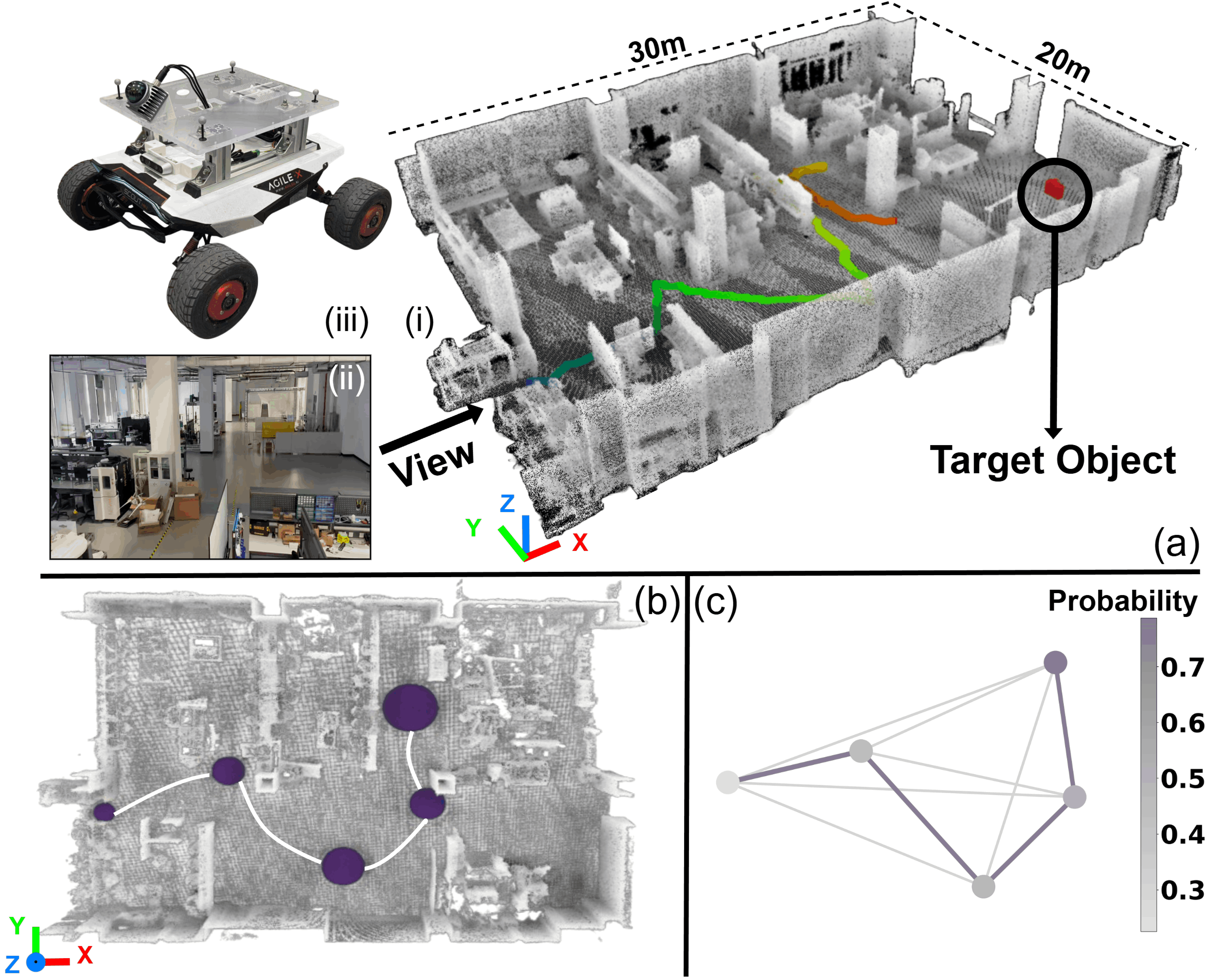}
    \caption{Target search in unknown environments using \abbrTSPPT. (a) shows the scenario of a real robot experiment, including (i) the point cloud map of the environment after finding the target object, where the colored line shows the trajectory of the robot; (ii) a photo of the test space; and (iii) the wheeled robot used in this experiment. (b) shows the top-down view of the environment, where purple dots represent candidate target locations, with marker sizes proportional to their corresponding probabilities of finding the target object there. (c) is the visualization of the generated solution path.}
    \label{tsppt:fig:intro}
\end{figure}

\abbrTSPPT generalizes \abbrTSP in the sense that if the probability at each vertex is set to zero, \abbrTSPPT reduces to the standard \abbrTSP.
\abbrTSP is NP-hard~\cite{karp2009reducibility} and so is \abbrTSPPT.
Although routing problems, such as \abbrTSP and Traveling Salesman Problem (TSP), have been studied for decades with many variants, none of them considers uncertainty similar to \abbrTSPPT formulated in this paper.
The difficulty of \abbrTSPPT arises from two sources. First, it inherits the fundamental challenge of \abbrTSP: determining the optimal visiting order of the vertices. Second, unlike \abbrTSP, the objective in \abbrTSPPT (i.e., the expected path length) depends on the path history.
Different visiting orders of the same set of vertices can yield different expected costs, which greatly enlarges the planning state space.
As a result, popular techniques for \abbrTSP and TSP such as integer linear programming via branch-and-cut cannot be readily applied to handle \abbrTSPPT.
Specifically, we attempted to formulate \abbrTSPPT as an integer program and find that, due to this dependency on historical visit order of vertices, the resulting program is integer nonlinear and computationally expensive.

To address these challenges, the first contribution of this paper is proposing algorithms for \abbrTSPPT with solution optimality or bounded sub-optimality guarantees.
Naively applying search or dynamic programming to \abbrTSPPT can lead to a state space that includes all permutations of the vertices in the graph, which grows combinatorially with the number of vertices. 
To bypass such a curse of dimensionality, we 
develop a new approach called Routing with Probabilistic Terminals Star (\abbrRPT) which leverages heuristic search to iteratively plan paths from the depot towards other vertices.
\abbrRPT considers a new state space to bypass the path history dependency, with designed heuristic functions to guide the planning process, and leverages new pruning rules to speed up the computation.
Besides, we observe that, as the number of visited vertices increases, the expected path cost grows slowly, which inspires us to develop a bounded sub-optimal variant of \abbrRPT by fusing focal search~\cite{4767270} to handle large graphs at the cost of finding solutions with small sub-optimality bounds.

We compare our \abbrRPT against several baseline methods including greedy, integer programming and regular TSP algorithms ignoring the probability, using both a public dataset on TSPLIB~\cite{reinelt1991tsplib} and a synthetic dataset with randomly generated graphs in Euclidean spaces.
Our test results show that, within the same runtime limit, \abbrRPT can find optimal solutions in graphs with up to $40$ vertices, and find  $10\%$ sub-optimal solutions in graphs with up to $200$ vertices.
For the solution quality, the expected path length found by the baselines is up to three times that of our \abbrRPT.

Based on the proposed algorithms, the second major contribution is a Hierarchical planning system for Autonomous Target Search (\abbrHATS).
\abbrHATS has three major components: (i) the high-level planning using \abbrRPT to determine the visiting order of targets, with each target indicating a region in the workspace, (ii) the low-level planning for local trajectories satisfying kinodynamic constraints of the robot, and (iii) a module that generates graphs and the probability of finding targets at each vertex for \abbrTSPPT based on sensor reading.

We implement \abbrHATS for two types of missions: lifelong target search in a known environment with noisy sensors (\abbrHATS-L, L for lifelong), and exploratory  target search in an unknown environment (\abbrHATS-U, U for unknown).
We compare \abbrHATS-L against baselines in Gazebo simulations and find that \abbrHATS-L tends to narrow down the likelihood about where target objects are in the presence of noisy sensors more quickly than the baselines.
Besides, we compare \abbrHATS-U against baselines in both simulation and real robots with YOLO software~\cite{yolo11_ultralytics} for real-time object detection.
We tested different ways to set probability values in order to represent both accurate and misleading prior knowledge about where the target object is, and find that our \abbrHATS-U is more robust to misleading prior knowledge and can better balance between exploitation (move greedily towards high probability region) and exploration (uniformly search all possible areas), and thus find target objects with about $50\%$ less time than the baselines on average.

The rest of the paper is organized as follows.
Sec.~\ref{TSPPT:sec:related} reviews the related work, and Sec.~\ref{TSPPT:sec:problem} formulates the \abbrTSPPT problem.
Sec.~\ref{TSPPT:sec:method} presents our \abbrRPT algorithm for \abbrTSPPT while Sec.~\ref{TSPPT:sec:ip} presents an integer program.
Then, Sec.~\ref{TSPPT:sec:system} describes the \abbrHATS system for lifelong target search and target search in unknown environments.
Finally, Sec.~\ref{TSPPT:sec:result} presents the experimental results and Sec.~\ref{TSPPT:sec:conclusion} discusses the conclusion and future work.

	\section{Related Work}\label{TSPPT:sec:related}
	\subsection{Hamiltonian Path Problem and its Variants}
The Hamiltonian Path Problem (HPP) and Traveling Salesman Problem (TSP) are both well-known routing problems that are NP-hard to solve to optimality~\cite{karp2009reducibility}.
While TSP seeks a tour that visits all vertices in a graph and returns to the depot (start vertex), HPP does not require the robot to return to the depot.
There are several variants of HPP based on whether the terminal vertex is fixed or not~\cite{monnot2005approximation,5991534}, and this paper does not require a fixed terminal vertex.

To solve HPP and TSP, a variety of routing approaches have been developed, trading off solution quality against runtime.
Among them, exact methods usually formulate the problem as an integer linear program and then solve by branch and cut techniques~\cite{cacchiani2020models,hungerlander2018efficient,Behdani2014AnIA}.
While being able to guarantee solution optimality, these approaches typically scale poorly to large graphs.
Bounded sub-optimal methods typically first solve a simpler problem and then convert the result to a desired solution path or tour for TSP and HPP~\cite{christofides2022worst,yadlapalli2011approximation}. 
Besides, heuristic techniques~\cite{min2024unsupervisedlearningsolvingtravelling,WANG2014124,hussain2017genetic}, such as the well-known LKH~\cite{helsgaun2000effective}, solve HPP and TSP quickly but usually have no solution quality guarantees.
This paper is interested in exact algorithms.

Although HPP, TSP and their variants have been studied a lot, few of them consider uncertainty as required in this work for target object search scenarios.
Many of them consider uncertain edge costs.
For example, the Canadian Traveler Problem~\cite{bar1991canadian,aksakalli2016based} consider edges with stochastic traversability, which is only revealed upon arrival.
Uncertainty has also been considered in stochastic variants of the TSP with randomness in travel costs~\cite{bertsimas1988probabilistic,toriello2014dynamic}, which focuses on uncertainty of cost rather than the vertex presence.
Other literature considers vertex uncertainty, which is still different from ours.
For example, Probabilistic Traveling Salesman Problem \cite{jaillet1985probabilistic,santini2022probabilistic} assigns each vertex a probability indicating the likelihood for the robot to skip that vertex during execution and seeks to minimize the expected path length, which is different from our target search problem as we require all vertices to be visited.
Overall, despite some routing problems with uncertainty have been studied from different perspectives, the challenge of ensuring guaranteed coverage with uncertainty of target object existence remains unaddressed by the literature.

\subsection{Target Search}
Target search is an important problem in robotics that arises in applications such as search and rescue~\cite{lyu2023unmanned}, environmental monitoring~\cite{bayat_environmental_2017}, and hazardous operations~\cite{lin2025high}.
Target search has been studied in different ways, including search for static or dynamic targets~\cite{DADGAR201662,ishida1991moving}, and search in a global or local fashion.

Seemingly related yet fundamentally different problems include target tracking~\cite{kumar2021recent, uhlmann1992algorithms} and ergodic search\cite{miller2015ergodic,ren2023pareto}.
In target tracking, the target’s location is assumed to be known or partially observed at all times, and the objective is to maintain persistent estimates of its state as it moves~\cite{blackman2004multiple}. Similarly, ergodic search assumes access to a predefined spatial distribution encoding information density, and aims to generate trajectories whose visitation frequency matches this known distribution~\cite{mathew2011metrics}.
In both cases, the agent operates with explicit prior knowledge, and the challenge is to track targets rather than discovery of targets as in this work.

Target search has been investigated in a ``local'' planning fashion.
Existing work mostly focuses on how to extract information from human~\cite{ramrakhya2022habitat} or the environment~\cite{aydemir2013active} to guide the search of the target, and employs simple planning strategy such as greedy search~\cite{jia2019research} to iteratively navigate the robot to the most informative region.
Besides, recent work has incorporated semantics~\cite{app10020497} and contextual priors~\cite{chikhalikar2024semantic} to guide target search. Active Visual Search~\cite{pronobis2012large} approaches use probabilistic models relating objects, rooms, and scene context to compute a belief distribution over the target’s location and select views accordingly.
However, these approaches are usually limited to a relatively small region and the focus is to navigate in semantically complex environments~\cite{aydemir2013active} or interaction with human during object search~\cite{ramrakhya2022habitat}.
In contrast, our work complement these existing work by addressing the problem from a global perspective, and achieving efficient target search during the exploration of unknown environments or lifelong search.

\subsection{Exploration of Unknown Environments}
Another related topic to this work is the exploration of unknown environments, where the objective is to efficiently build a map or maximize coverage of unexplored space. 
These methods seek to explore the environment as quick as possible, by selecting frontier points~\cite{yamauchi1998frontier,froniters_base}, maximizing information theoretic metrics~\cite{inf_computationally,bai2016information}, or optimizing exploration paths via TSP~\cite{tare,cao2023representation} and receding horizon strategies~\cite{nbvp} to ensure complete coverage of the environment.

Our \abbrHATS-U is inspired by recent hierarchical approaches~\cite{cao2023representation,zhou2023racer} that decomposes exploration into high-level region selection and low-level motion planning, enabling efficient navigation and coverage in unknown spaces. 
Target search differs fundamentally from exploration as the goal is to locate a specific target as quickly as possible rather than the coverage of an unknown environment.
For target search, exploring every region is neither necessary nor desirable.

	\section{Problem Statement of \abbrTSPPT}\label{TSPPT:sec:problem}
	Let $G = (V, E)$ denote a finite directed complete graph with $N$ vertices, where the vertex set $V$ represents possible locations, and edge set $E \subseteq V \times V$ represents possible transitions between locations. 
Each edge is associated with a finite transition cost $c(e) \in \mathbb{R^+}$, which is the cost for the robot to go through the edge.
The edge costs satisfy the triangle inequality.
Each vertex $v\in V$ is associated with a real number $p(v) \in [0,1)$, denoting the probability that $v$ is used as the terminal, i.e., the robot ends its path at $v$.
In other words, $p(v)$ is the probability that the robot finds a target object at $v$, and the robot stops at $v$ thereafter.

\begin{definition}[Expected Path Cost]\label{tsppt:def:expected_cost}
Let $\pi=\pi(v_1, v_N)=(v_{1},v_{{2}},\dots,v_{N})$ denote a path from vertex $v_1$ to $v_N$ in $G$.
The expected cost $\xi(\pi)$ of $\pi$ is a random variable related to the probability of finding the target at each vertex $v_i \in \pi$, and the expectation of this path cost can be defined as follows:
\begin{align}
\label{tsppt:eq:expected_cost}
&\xi(\pi) := (1-p_1)p_2c_{1,2} + \\
&(1-p_1)(1-p_2)p_3(c_{1,2} + c_{2,3}) + \dots +  \nonumber \\ 
&(1-p_1)(1-p_2)\cdots(1-p_{N-1})(c_{1,2} + c_{2,3}+\dots + c_{N-1,N})\nonumber 
\end{align}
where $c_{k,k+1}$ denotes the edge cost $c(e),e=(v_k,v_{k+1})$ in $\pi$.
\end{definition}

Here, the $k$-th ($1\leq k \leq N-1$) term corresponds to the expected path cost of finding the target at the $(k+1)$-th vertex.
Note that, the last term in $\xi(\pi)$ does not have a $p_N$ in the middle of that term, since if the robot fails to find a target at the previous $N-1$ vertices, the robot has to go to the last vertex $v_N$ anyway, regardless of the probability of finding a target at $v_N$.

\begin{definition}[\abbrTSPPT]
Let $v_s$ denote the start vertex, and the goal of the Hamiltonian Path Problem with Probabilistic Terminals is to find a path $\pi$ in $G$ starting from $v_s$ such that, (i) $\pi$ visits all vertices in $G$ exactly once, and (ii) the expected cost $\xi(\pi)$ reaches the minimum.
\end{definition}

\begin{remark}
When applying \abbrTSPPT to autonomous target search, there are different cases which may impose additional constraints to the problem formulation.
If there is only one target object (such as an ID card) in $G$ and it is known that the object must exist in $G$, then an additional constraint $\sum_{v\in V}p(v)=1$ should be imposed.
If there are multiple target objects (such as trash bins) in $G$ and the robot needs to find any one of them, then there is no additional constraint on $p(v),v\in V$.
Our approaches that will be presented next do not rely on those constraints and is applicable to all these problem variants.
\end{remark}
    
	\section{\abbrRPT Algorithm }\label{TSPPT:sec:method}
	A*-like algorithms~\cite{4082128} are widely used to solve a variety of path planning problems, and our presentation assumes readers are familiar with A*~\cite{4082128}.
However, A* cannot be directly applied to solve \abbrTSPPT since the expected path cost in Def.~\ref{tsppt:def:expected_cost} to be minimized depends on the visiting order of vertices.
A naive extension of A* for \abbrTSPPT would have to search in a state space that includes all permutations of vertices in $G$, which grows combinatorially with respect to the size of the graph, and will soon become computationally intractable.
To address this difficulty, we identify a transformation of the expected path cost in Def.~\ref{tsppt:def:expected_cost} by introducing an additional variable, which helps bypass this dependency on the visiting order of vertices and allows the planner to search a new state space efficiently.

\subsection{Concepts and Notations}
\subsubsection{New State Definition}
Along a path $\pi=(v_1,v_2,\cdots,v_N)$, let $q_{k}= (1-p(v_1))(1-p(v_2))\cdots (1-p(v_k))$ denote the probability of \emph{not} finding a target at any vertex among the first $k$ vertices in $\pi$.
With $q_{k}$, $\xi(\pi)$ can be reformulated.

\begin{lemma}\label{tsppt:lem:xi_rewrite}
    The expected path cost in Eq. (\ref{tsppt:eq:expected_cost}) is the same as the following expression:
    \begin{align*}
        \xi(\pi) &= q_{1}c_{1,2} + q_{2}c_{2,3} + \dots +q_{N-1}c_{N-1,N} = \sum_{i=1}^{N-1} q_{i}c_{i,i+1}
    \end{align*}
\end{lemma}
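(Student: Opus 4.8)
The plan is to treat this as an algebraic identity and prove it by regrouping the sum in Definition~\ref{tsppt:def:expected_cost} according to edge cost rather than according to the terminal vertex. First I would fix an edge cost $c_{m,m+1}$ and determine its total coefficient in the expansion of $\xi(\pi)$. The cost $c_{m,m+1}$ appears in exactly those terms whose running sum $\sum_{i} c_{i,i+1}$ reaches index $m$, i.e., in the terms corresponding to the target being found at $v_{m+1}, v_{m+2}, \dots, v_N$. Collecting these contributions, and recalling that the final term carries no $p_N$ factor, the coefficient of $c_{m,m+1}$ is $\sum_{k=m}^{N-2} q_k p_{k+1} + q_{N-1}$.

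The crux is the one-step telescoping identity $q_k p_{k+1} = q_k - q_{k+1}$, which follows directly from the definition $q_{k+1} = q_k (1 - p_{k+1})$. Substituting it collapses the sum, since $\sum_{k=m}^{N-2}(q_k - q_{k+1}) = q_m - q_{N-1}$, and adding the isolated boundary term $q_{N-1}$ leaves exactly $q_m$. Since this holds for every $m \in \{1, \dots, N-1\}$, each edge $c_{m,m+1}$ contributes with coefficient $q_m$, which is precisely the claimed expression $\sum_{i=1}^{N-1} q_i c_{i,i+1}$.

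I expect the main obstacle to be bookkeeping rather than any deep idea, specifically correctly handling the asymmetry of the last term in Eq.~(\ref{tsppt:eq:expected_cost}), which (as the paper notes immediately after the definition) lacks the $p_N$ multiplier. If this boundary term were mishandled, the telescoping would leave a spurious $-q_{N-1}$ and the identity would break, so tracking it is the delicate part of the argument. A cleaner-to-state but more tedious alternative would be induction on $N$, peeling off the terminal vertex and reusing the identity on the shorter path; I would prefer the coefficient-collection argument, since it makes the telescoping transparent and sidesteps re-deriving the truncated-path structure at each inductive step.
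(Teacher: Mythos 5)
Your proposal is correct and matches the paper's own proof: both regroup Eq.~(\ref{tsppt:eq:expected_cost}) by edge cost $c_{k,k+1}$, identify its coefficient as $\sum_{j=k}^{N-2} q_j p_{j+1} + q_{N-1}$, and collapse it to $q_k$ via the identity $q_j p_{j+1} + q_{j+1} = q_j$. Your explicit telescoping-sum phrasing is just a cleaner presentation of the paper's iterative back-to-front collapse, so the two arguments are essentially identical.
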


\begin{proof}
Note that $q_{k-1} p_{k} + q_{k} = q_{k-1}(p_{k} + (1-p_{k})) = q_{k-1}$.
Hence for any $k=1,2,\cdots,N-1$ in Eq. (\ref{tsppt:eq:expected_cost}), all terms related to $c_{k,k+1}$ can be written as
\begin{align*}
&(q_{k} p_{k+1} + q_{k+1} p_{k+2} + \cdots + q_{N-2}p_{N-1} + q_{N-1})c_{k,k+1} \\
&=(q_{k} p_{k+1} + q_{k+1} p_{k+2} + \cdots + q_{N-2})c_{k,k+1}\\
&=\cdots=q_k c_{k,k+1}.
\end{align*}

This can be applied to all terms $c_{k,k+1},k=1,2,\cdots,N-1$, thereby proving this lemma.
\end{proof}

We now define search states based on $q$ and show that the new state definition is Markovian, i.e., the future path is independent from the path history given the state.
Let $s=(v,g,q,A)$ denote a state, which consists of a vertex $v\in V$, a $g$-cost as commonly used in A* $g \in \mathbb{R}^+$, a probability value $q \in [0,1]$, and a set of vertices $A\subseteq V$.
We use $v(s),g(s),q(s),A(s)$ to denote the respective component of state $s$.
Later, we will use this state definition to search for paths in $G$ and use parent pointers to record the path.
Intuitively, each state $s$ identifies a path from the start vertex $v_s$ that has visited the vertices in $A$, and reaches the vertex $v(s)$ with the cost-to-come $g(s)$ and the probability $q(s)$ of not finding a target along that path yet.
\begin{remark}
Here, for a state $s$ and the corresponding path $\pi$ from the start vertex to $v(s)$ as recorded by the parent pointers, the $g$-cost $g(s)$ is same as $\xi(\pi)$ as defined in Lemma~\ref{tsppt:lem:xi_rewrite}.
We use two different notations to highlight that, $\xi$ is the expected cost of a solution path that visits all vertices as introduced in this new \abbrTSPPT, while $g(s)$ is the $g$-cost as commonly used in A* and is the expected cost of a (partial) path from the start vertex to some other vertex in $G$ before visiting all vertices.
\end{remark}

\subsubsection{State Expansion}
When expanding a state $s$, the path identified by $s$ is extended to all successor vertices in $G$ that have not yet been visited.
Let $s'$ denote a new successor state that is generated when expanding $s$.
Then, the components of $s'$ can be computed as follows: $v(s')$ is a successor vertex of $v(s)$, $v(s') \notin A(s)$ and
\begin{align}
g(s') &= g(s) + q(s) \cdot c(v(s),v(s'))\label{tsppt:eq:g_prime}\\
q(s')&=q(s) (1-p(v(s')))\label{tsppt:eq:q_prime}\\
A(s') &= A(s)\cup \{v(s')\}\label{tsppt:eq:A_prime}
\end{align}
Eq.~(\ref{tsppt:eq:g_prime}) holds because of Lemma~\ref{tsppt:lem:xi_rewrite}.
Eq.~(\ref{tsppt:eq:q_prime}) follows the definition of $q$, and 

\begin{lemma}\label{tsppt:lem:markov}
A state $s$ is Markovian when generating successor states by expanding $s$. 
\end{lemma}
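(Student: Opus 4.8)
The plan is to make the Markovian claim precise as follows: if two distinct prefix paths $\pi_1$ and $\pi_2$ both lead to the same state $s=(v,g,q,A)$, then for every ordering $\sigma$ of the remaining vertices $V\setminus A$, the completed paths $\pi_1\oplus\sigma$ and $\pi_2\oplus\sigma$ have identical expected cost. Establishing this equality for an arbitrary but fixed continuation shows that the optimal way to finish the path---and its cost---depends only on $s$ and not on how $s$ was reached, which is exactly the ``future path independent from the path history given the state'' property asserted by the lemma.

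First I would observe that since $A(s)$ is part of the state, any two prefixes reaching $s$ have visited exactly the same vertex set, so the set of admissible continuations (orderings of $V\setminus A$ reachable from $v$) is identical for both; this lets me compare $\pi_1\oplus\sigma$ and $\pi_2\oplus\sigma$ for a common $\sigma$. Next, I would relabel the vertices of a completed path $\pi\oplus\sigma$ as $(x_1,\dots,x_N)$ with $x_k=v(s)$ and apply Lemma~\ref{tsppt:lem:xi_rewrite} to write $\xi(\pi\oplus\sigma)=\sum_{i=1}^{N-1}Q_i c_{i,i+1}$, where $Q_i=\prod_{j=1}^{i}(1-p(x_j))$. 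I then split this sum at the seam $i=k$ into a prefix part $\sum_{i=1}^{k-1}Q_i c_{i,i+1}$ and a continuation part $\sum_{i=k}^{N-1}Q_i c_{i,i+1}$.

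The crux is to show each part depends only on $s$ and $\sigma$. By the remark preceding the lemma, the prefix part equals $g(s)$, which is common to both $\pi_1$ and $\pi_2$. For the continuation part, I would factor $Q_i=q(s)\prod_{j=k+1}^{i}(1-p(x_j))$ for every $i\ge k$, using $q(s)=Q_k$; since the vertices $x_{k+1},\dots,x_N$ and the connecting vertex $x_k=v(s)$ are all fixed by $\sigma$ and by $s$, every term $Q_i c_{i,i+1}$ in the continuation is determined by $q(s)$, $v(s)$, and $\sigma$ alone---never by the internal order of $\pi$. Hence both parts agree across $\pi_1$ and $\pi_2$, giving $\xi(\pi_1\oplus\sigma)=\xi(\pi_2\oplus\sigma)$.

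I expect the main obstacle to be the bookkeeping at the seam index $i=k$: the connecting edge $(v(s),x_{k+1})$ carries the coefficient $Q_k=q(s)$, and I must verify that this single shared factor $q(s)$ is precisely what decouples the prefix from the continuation. Once the factorization $Q_i=q(s)\prod_{j=k+1}^{i}(1-p(x_j))$ is in hand, the rest is immediate, and the successor formulas in Eqs.~(\ref{tsppt:eq:g_prime})--(\ref{tsppt:eq:A_prime}) confirm that generating $s'$ reads only $g(s),q(s),A(s),v(s)$, reinforcing that no history beyond $s$ is ever used.
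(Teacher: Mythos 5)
Your proof is correct, but it takes a genuinely different route from the paper's. The paper's proof is a one-line syntactic observation: the successor-generation equations (\ref{tsppt:eq:g_prime})--(\ref{tsppt:eq:A_prime}) read only the components $(v,g,q,A)$ of $s$ and never mention its parents or ancestors, hence the transition is history-free. You instead prove the semantic content of the Markov claim: for any two prefixes $\pi_1,\pi_2$ reaching the same state $s$ and any common continuation $\sigma$ over $V\setminus A(s)$, the completed costs coincide, because Lemma~\ref{tsppt:lem:xi_rewrite} lets you split $\xi(\pi\oplus\sigma)$ at the seam into a prefix part equal to $g(s)$ and a continuation part that factors as $q(s)$ times a quantity determined only by $v(s)$ and $\sigma$ (the seam bookkeeping $Q_i=q(s)\prod_{j=k+1}^{i}(1-p(x_j))$ is exactly right, since the paper's $q$ includes the factor $(1-p(v(s)))$ for the current vertex). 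What each approach buys: the paper's argument is minimal and suffices for the lemma as literally stated (successor generation depends only on $s$), whereas yours establishes the stronger interchangeability of histories, i.e., $\xi(\pi\oplus\sigma)=g(s)+q(s)\cdot C(v(s),\sigma)$ with $C$ history-independent, which is what actually licenses the paper's ensuing claim that permutation histories can be discarded during search, and which foreshadows the cost-decomposition reasoning the paper later needs in the dominance argument of Lemma~\ref{tsppt:lem:dom}. In that sense your proof does more work but delivers a correspondingly stronger guarantee.
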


\begin{proof}
When expanding $s$ to any successor $s'$ with $v(s') \notin A(s)$, Eq.~(\ref{tsppt:eq:g_prime})(\ref{tsppt:eq:q_prime})(\ref{tsppt:eq:A_prime}) shows that the successor state $s'$ depends on $s$ only and there is no dependency on the history, i.e., the parents or ancestor states of $s$.
\end{proof}

Due to Lemma~\ref{tsppt:lem:markov}, we can iteratively extend paths in $G$ by expanding states based solely on their current $A,g,q$ variables, without considering their history. This eliminates the need to consider all possible permutations (i.e., visiting orders) of vertices during planning.

\subsubsection{State Comparison and Pruning}
With the new state definition, each state $s$ identifies a path from $v_s$ to $v(s)$ while visiting the vertices in $A(s)$ along the path.
For two paths represented by states $s_1,s_2$ that reaches the same vertex $v$, (i.e., $v(s_1)=v(s_2)=v$), we need to compare them and potentially prune one of them to reduce the planning effort.
To compare these two states, we introduce the following dominance rule.
\begin{definition}[\bf State Dominance]\label{tsppt:def:state_dom}
    Given two states $s_1$, $s_2$ with the same vertex (i.e., $v(s_1) = v(s_2)=v$), we say $s_1$ dominates $s_2$, if all of the following conditions hold: (i) $g(s_1) \leq g(s_2)$
    and (ii) $A(s_1) \supseteq A(s_2)$.
\end{definition}

\begin{lemma}
    In Def.~\ref{tsppt:def:state_dom}, the condition $A(s_1) \supseteq A(s_2)$ implies that $q(s_1) \leq q(s_2)$.
\end{lemma}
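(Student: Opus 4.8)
The plan is to reduce the claim to the elementary fact that $q$ factors as a product over the visited set $A$, after which monotonicity is immediate. The first and central step is to establish the closed form
\begin{align*}
q(s) = \prod_{u \in A(s)} \bigl(1 - p(u)\bigr),
\end{align*}
which turns the recursive definition in Eq.~(\ref{tsppt:eq:q_prime}) into an explicit expression. I would prove this by induction on the number of vertices in $A(s)$: the start state has $A = \{v_s\}$ with $q = 1 - p(v_s)$, matching the formula, and each expansion step simultaneously inserts exactly one new vertex $v(s')$ into $A$ (Eq.~(\ref{tsppt:eq:A_prime})) while multiplying $q$ by the corresponding factor $1 - p(v(s'))$ (Eq.~(\ref{tsppt:eq:q_prime})), which preserves the product form.

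Given this identity, the second step uses the hypothesis $A(s_1) \supseteq A(s_2)$. Setting $B := A(s_1) \setminus A(s_2)$, the product over $A(s_1)$ splits as
\begin{align*}
q(s_1) = \Bigl(\prod_{u \in A(s_2)} \bigl(1-p(u)\bigr)\Bigr)\Bigl(\prod_{u \in B}\bigl(1-p(u)\bigr)\Bigr) = q(s_2)\prod_{u \in B}\bigl(1-p(u)\bigr).
\end{align*}
Since each $p(u) \in [0,1)$ forces every factor $1 - p(u)$ into $(0,1]$, the extra product over $B$ is at most $1$, so $q(s_1) \leq q(s_2)$, which is exactly the claim.

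The only step that needs any care is the closed-form identity, because the state definition specifies $q$ only recursively; once that is in place, the result is a one-line monotonicity argument. I do not anticipate a genuine obstacle, but I would make sure the induction's base case is consistent with how the start state's $q$ is initialized, so that the common start vertex $v_s$ is accounted for identically in both $q(s_1)$ and $q(s_2)$.
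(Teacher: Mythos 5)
Your proof is correct and takes essentially the same approach as the paper's: both express $q(s)$ as a product of factors $(1-p(u)) \in (0,1]$ over the visited vertices, and both conclude by noting that $q(s_1)$ equals $q(s_2)$ multiplied by the extra factors for $u \in A(s_1)\setminus A(s_2)$, each of which is at most one (the paper simply asserts the product form, whereas you prove it by induction). The base-case wrinkle you flagged is real---the paper initializes $A(s_o)=\emptyset$ while $q(s_o)=1-p(v_s)$, so the closed form actually carries an extra common factor $(1-p(v_s))$---but since that factor appears in both $q(s_1)$ and $q(s_2)$ it does not affect the inequality, and the paper's own proof glosses over the same point.
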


\begin{proof}
For any state $s$, the value $q(s)$ is the product of $(1-p(v)),\forall v\in A(s)$ and $(1-p(v)) \in (0,1]$.
Given that $A(s_1)$ is a superset of $A(s_2)$, then $q(s_1)$ is equal to $q(s_2)$ multiplied by possibly more terms $(1-p(v)),v\in A(s_1), v\notin A(s_2)$, which makes $q(s_1) \leq q(s_2)$.
\end{proof}

If $s_1$ dominates $s_2$, then $s_2$ cannot lead to a better solution than $s_1$ and can thus be discarded.
We prove the correctness of this dominance rule later in Lemma~\ref{tsppt:lem:dom}, which involves some technical details.

If $s_1$ does not dominate $s_2$ and $s_2$ does not dominate $s_1$, then $s_1$ and $s_2$ are non-dominated by each other, and both of them need to be kept for planning.
Let $\mathcal{F}(v), \forall v\in V$ denote a \emph{frontier set} defined at each vertex, which is a set of non-dominated states at the same vertex (Fig.~\ref{tsppt:fig:label}).

Finally, let $h(s)$ denote the heuristic value of state $s$, which estimates the expected cost-to-go.
We provide a heuristic function in Sec.~\ref{tsppt:subsec:heuristic}.
Let $f(s):=g(s) + h(s)$ denote the $f$-value of state $s$, which estimates the total expected cost of any solution that uses state $s$.

\begin{figure}
    \centering
    \includegraphics[width=\linewidth]{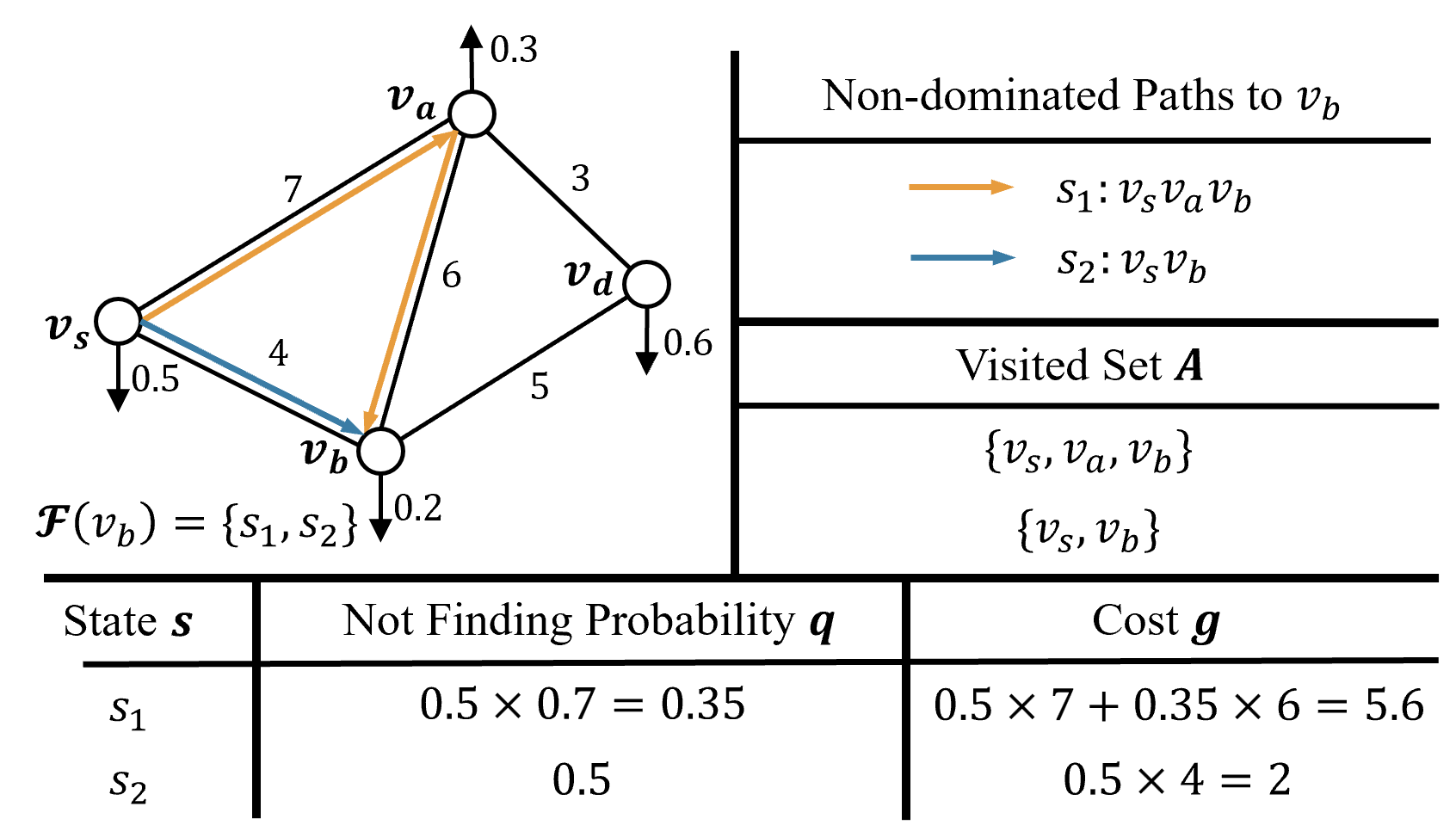}
    \caption{Visualization of related concepts.}
    \label{tsppt:fig:label}
\end{figure}

\subsection{Planning Algorithm}
\begin{algorithm}[tb]
		\small
	\caption{Pseudocode for \abbrRPT}\label{tsppt:alg:search1}
	\begin{algorithmic}[1]
            \State{$s_o \gets (v=v_s, g=0, q=1-p(v_s), A=\emptyset)$}
            \State{$h(s_o) \gets $ \texttt{GetHeuValue}($s_o$), $f(s_o) \gets g(s_o)+h(s_o)$}\label{tsppt:alg:xxx:line_init}
            \State{Add $s_o$ to OPEN {\brown (and FOCAL)}}
            \State{$parent(s_o)\gets null$}
            \State{$\mathcal{F}(v) \gets \emptyset, \forall v\in V$}
            \While{OPEN $\neq \emptyset$}\label{tsppt:alg:search1:line_whileBegin}
            \State{extract $s$ from OPEN {\brown (or FOCAL)}}\label{tsppt:alg:search1:line_extract}
            \If{\texttt{IsPruned}($s$)}\label{tsppt:alg:search1:line_isPruneBeforeExp}
            \State{\textbf{continue}}
            \EndIf
            \State{\texttt{FilterAndAddFront}($s$)} 
            \If{$A(s)=V$}\label{tsppt:alg:search1:line_ifBetterIncum}
            \State{\textbf{return} \texttt{Reconstruct}($s$)}
            \EndIf
            
            \State{$U \gets$ \texttt{GetSuccessors}$(s)$}\label{tsppt:alg:search1:line_getSucc}
            
            \For{$v'\in U$}
            \State{$g' \gets g(s) + q(s) \cdot c(v(s),v')$}
            \State{$q' \gets q(s) (1-p(v'))$}
            \State{$A' \gets A \cup \{v'\}$}
            \State{$s' \gets (v',g',q',A')$}
            \If{\texttt{IsPruned}($s'$)}\label{tsppt:alg:search1:line_isPruneAfterGen}
            \State{\textbf{continue}}
            \EndIf
            \State{$h(s') \gets$ \texttt{GetHeuValue}($s'$), $f(s') \gets g(s') + h(s')$}
            \State{$parent(s') \gets s$}
            \State{add $s'$ to OPEN {\brown (and FOCAL)}}
            \EndFor
            
            \EndWhile \label{tsppt:alg:search1:line_whileEnd}
            
            \State{\textbf{return} failure}
  	\end{algorithmic}
        
    \hrule
	\begin{algorithmic}[1]
            \Statex{\texttt{IsPruned($s$)}}
            \For{$s' \in \mathcal{F}(v(s))$}
            \If{$s'$ dominates $s$}\label{tsppt:alg:isPruned:lineDom}
            \State{\textbf{return} true}
            \EndIf
            \EndFor
            \State{\textbf{return} false}
  	\end{algorithmic}
    
    \hrule
	\begin{algorithmic}[1]
            \Statex{\texttt{FilterAndAddFront($s$)}}
            \For{$s' \in \mathcal{F}(v(s))$}
            \If{$s$ dominates $s'$}\label{tsppt:alg:FilterAndAddFront:lineDom}
            \State{remove $s'$ from $\mathcal{F}(v(s))$}
            \EndIf
            \EndFor
            \State{add $s$ to $\mathcal{F}(v(s))$}
  	\end{algorithmic}
    
\end{algorithm}

To initialize, \abbrRPT (Alg.~\ref{tsppt:alg:search1}) first creates an initial state $s_o \gets (v=v_s, g=0, q=1-p(v_s), A=\emptyset)$, which is added to OPEN, a priority queue where states are prioritized based on their $f$-values from the minimum to the maximum.
\abbrRPT maintains parent pointers of the states during planning to easily reconstruct a path when a solution is found, and initializes the parent point of the initial state to be $null$.
The frontier sets $\mathcal{F}(v), \forall v\in V$ are initialized as empty sets.

In each search iteration (Line \ref{tsppt:alg:search1:line_whileBegin}-\ref{tsppt:alg:search1:line_whileEnd}), a state with the smallest $f$-value is extracted from OPEN for processing.
Then, a procedure \texttt{IsPruned} is called to check if $s$ should be discarded based on dominance. Specifically, \texttt{IsPruned} iterates all existing states $s' \in \mathcal{F}(v(s))$ and check if $s'$ dominates $s$.
If so, $s$ cannot lead to a better solution than $s'$ and is thus discarded.
The current search iteration ends.

If $s$ is not pruned, $s$ is then added to $\mathcal{F}(v)$ in procedure \texttt{FilterAndAddFront}, where any existing states in $s' \in \mathcal{F}(v(s))$ is removed if it is dominated by $s$.
Notably, \texttt{FilterAndAddFront} differs from \texttt{IsPruned} as \texttt{FilterAndAddFront} seeks to remove the existing states in $\mathcal{F}(v(s))$ while \texttt{IsPruned} seeks to remove $s$.
Subsequently, the algorithm checks if $s$ visits all vertices in $G$.
If $A(s)= V$, all vertices are visited, and a solution is found. The path is then reconstructed by tracking the parent pointers of $s$ iteratively until reaching the initial state $s_o$.

If $s$ does not visit all vertices $(A(s)\neq V)$, $s$ is expanded.
The set $U$ of all unvisited successor vertices of $s$ are obtained.
For each of these successor vertex $v' \in U$, a corresponding new state $s'$ is generated as described in Eq.~(\ref{tsppt:eq:g_prime})(\ref{tsppt:eq:q_prime})(\ref{tsppt:eq:A_prime}).
Then, \texttt{IsPruned} is called to check if the new state $s'$ should be pruned.
Finally, if $s'$ is not pruned, the $f$-value of $s'$ is computed, and $s'$ is added to OPEN for future expansion.

\begin{remark}
    The proposed \abbrRPT is related to some recent multi-objective search algorithms~\cite{HERNANDEZ2023103807,2024_AIJ_EMOA,ren22emoa} and some recent search methods for TSP variants~\cite{cao2024heuristic,2025_SOCS_BOTSPTW_ShizheZhao}.
    However, the expected cost function is fundamentally different from the cost definitions in those papers.
    As a result, the heuristic design and pruning rules, which will be presented next, are very different from those existing work.
\end{remark}

\subsection{Heuristic Computation}\label{tsppt:subsec:heuristic}
\begin{figure}
    \centering
    \includegraphics[width=0.7\linewidth]{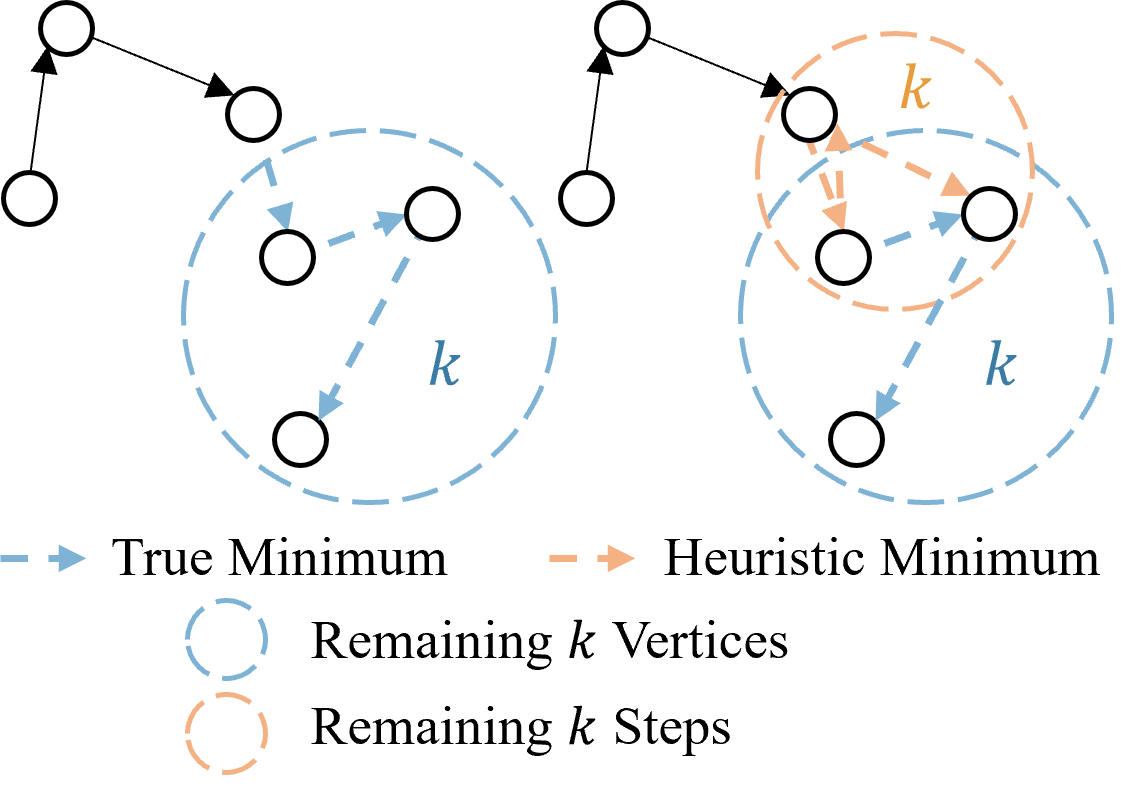}
    \caption{Heuristic function of our \abbrRPT algorithm: for the remaining $k$ unvisited vertices, the heuristic models the process as $k$ steps, explicitly allowing for repeated visits to vertices. }
    \label{tsppt:fig:Method_heuristic}
\end{figure}
Heuristic functions $h$ can help guide the A* search and improve computational efficiency, and we thus introduce a heuristic function for \abbrTSPPT (Fig.~\ref{tsppt:subsec:heuristic}).
The proposed heuristic requires the pre-computation of a table $\gamma$ as follows.
For any $v\in V$, let $\gamma(v,k)$ denote an estimated cost-to-go from $v$ while visiting other remaining $k$ vertices where $k$ is an integer ranging from $0$ to $|V|-1$.
For the base case $k=0$, let $\gamma(v,0) = 0$, which means there is no additional vertices to be visited from $v$.
Then, for integers $i\in[0,k-1]$,
\begin{eqnarray}
\gamma(v,i+1)=\min_{u,u\neq v} (1-p(v))\cdot(c(v,u) + \gamma(u,i)), \label{tsppt:eqn:heurstic1} 
\end{eqnarray}
which is minimum incremental cost to visit any other vertex $u$ that is different from $v$ multiplied by the probability $(1-p(v))$ of not finding the target at $v$.

\begin{lemma}
    The pre-computation of table $\gamma$ has the worst case runtime complexity $O(|V|^2)$.
\end{lemma}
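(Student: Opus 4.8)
The plan is to bound separately the number of entries in the table $\gamma$ and the cost of producing each entry, and then to combine the two. First I would count the entries: $\gamma$ is indexed by a vertex $v \in V$ together with an integer $k \in \{0,1,\ldots,|V|-1\}$, so the table has exactly $|V|\cdot|V| = O(|V|^2)$ entries. This immediately fixes the memory footprint and, more importantly, pins down the accounting target: to obtain an $O(|V|^2)$ runtime, the \emph{amortized} work spent on each entry must be $O(1)$.

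Next I would organize the computation as a layered dynamic program that sweeps $k$ from $0$ upward, since Eq.~(\ref{tsppt:eqn:heurstic1}) expresses the layer $k=i+1$ purely in terms of the already-computed layer $k=i$. The base layer is immediate, $\gamma(v,0)=0$ for all $v$, costing $O(|V|)$. The reason for using this layered order is that, before the sweep begins, one can run a single preprocessing pass over the complete cost matrix to record, for each vertex $v$, the per-vertex information needed to evaluate the inner minimization quickly; this pass touches every edge once and so costs $O(|V|^2)$. I would then argue that, with this information cached, each of the $|V|$ layers is filled by one constant-time update per vertex, i.e.\ in $O(|V|)$ time, so the sweep contributes $O(|V|\cdot|V|)=O(|V|^2)$. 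Adding the one-time $O(|V|^2)$ preprocessing and the $O(|V|)$ base layer leaves the total at $O(|V|^2)$.

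The main obstacle, and the step I expect to require the most care, is justifying that the inner minimization $\min_{u\neq v}\bigl(c(v,u)+\gamma(u,i)\bigr)$ can be evaluated in amortized $O(1)$ per vertex per layer. The easy part of this is that the factor $(1-p(v))$ pulls outside the minimum and hence does not affect the minimizer; the genuine difficulty is the joint dependence of the minimand on both the edge cost $c(v,u)$ and the previous-layer value $\gamma(u,i)$. A naive recomputation scans all $|V|-1$ candidates $u$ and costs $\Theta(|V|)$ per entry, which would inflate the total to $\Theta(|V|^3)$. Reaching the claimed $O(|V|^2)$ therefore hinges on exhibiting layer-independent structure of the minimizer — for instance, that the cheapest next transition out of $v$ recorded during preprocessing already realizes the minimum at every layer — so that the cached per-vertex data suffices throughout the sweep. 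The real content of the bound lies in verifying that this reuse faithfully reproduces the recurrence; establishing precisely which structural property of the complete, triangle-inequality cost graph licenses it is the part I would prove carefully before claiming the quadratic runtime.
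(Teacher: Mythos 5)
You have pinpointed exactly the right difficulty, but your proposal leaves it unresolved, and it cannot be resolved in the way you hope. The layer-independent structure you conjecture---that the cheapest transition out of $v$, recorded once during preprocessing, realizes $\min_{u\neq v}\left(c(v,u)+\gamma(u,i)\right)$ at every layer $i$---is false in general. The minimand couples $c(v,u)$ with $\gamma(u,i)$, and the vector $\gamma(\cdot,i)$ evolves non-uniformly across layers: a vertex $u$ with $p(u)$ close to $1$ has $(1-p(u))$ close to $0$ and hence a tiny $\gamma(u,i)$ at every layer, while a vertex with $p(u)=0$ accumulates cost as $i$ grows. So at layer $0$ the minimizer for $v$ is its nearest neighbor, but at later layers the minimizer can switch to a farther vertex whose cost-to-go is much smaller; the triangle inequality does not rule this out. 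Consequently each layer is a genuine min-plus matrix--vector product, which requires $\Theta(|V|^2)$ work in the worst case, and the dynamic program defined by Eq.~(\ref{tsppt:eqn:heurstic1}) runs in $\Theta(|V|^3)$, not $O(|V|^2)$.

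You should also know that the paper's own proof contains precisely the gap you flagged: it multiplies the number of layers ($|V|$) by the number of vertices per layer ($|V|$) and concludes $O(|V|^2)$, silently treating the inner minimization over $u$ as constant-time work. Counted honestly, filling one entry costs $O(|V|)$, so the straightforward precomputation is $O(|V|^2)$ entries at $O(|V)|$ each, i.e.\ $O(|V|^3)$ total. Your refusal to claim the quadratic bound before exhibiting an amortized-$O(1)$ inner step was the correct instinct; no such step exists for general cost matrices, and the lemma's bound should be stated as $O(|V|^3)$ (or as $O(|V|^2)$ per layer). This weaker bound is still polynomial and still negligible relative to the exponential search space that the heuristic serves, so nothing downstream in the paper is affected, but the lemma as stated is not established by either your argument or the paper's.
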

\begin{proof}
The table $\gamma$ involves both $v \in V$ and $i \in [0,|V|-1]$, and has $|V|^2$ items in total.
Eq.~\ref{tsppt:eqn:heurstic1} allows the use of dynamic programming to compute the entire table from $i=0$ to $|V|-1$, and for each $i$, the computation needs to loop over all $v \in V$, which results in the worse case runtime complexity $O(|V|^2)$.
\end{proof}

Then, the heuristic function $h(s)$ of any state $s$ can be computed during the search with the help of this pre-computed table $\gamma$ as follows.
\begin{align}
h(s) &= q(s) / \left( 1 - p(v(s)) \right) \cdot \gamma(v(s), k(s)), \nonumber\\
k(s) &= |V| - |A(s)|
\label{tsppt:eqn:heurstic2}
\end{align}
Here, the term $q(s)/\{1-p(v(s))\}$ is basically the $q$-value of the parent state of $s$ if $s$ is not the initial state ($s \neq s_o$).
When $s=s_o$, $q(s_o)=1$ and the term is still well-defined. The term $k(s)$ is the number of unvisited vertices of state $s$.

We will prove that the heuristic $h(s)$ is admissible (Lemma~\ref{tsppt:lem:heu_admiss}).
Intuitively, this heuristic allows a vertex with the cheapest visiting cost to be used many times in the future, i.e., we only require that $v$ is not the same as its immediate successor $u$, and we ignore the constraints that $u$ should not be the same as any other vertices that will be visited in the future.
This heuristic function is not an accurate estimate of cost-to-go since it neglects the path history about previously visited nodes. However, this heuristic is relatively fast to compute.

\subsection{Bounded Sub-optimal Variant}

Finding an optimal solution for \abbrTSPPT is computationally burdensome for large graphs.
In large graphs, as the path length grows during planning, the $q$-values of states (Eq.~(\ref{tsppt:eq:q_prime})) keep descreasing, which makes the $g$-values of states plateau (Eq.~(\ref{tsppt:eq:g_prime})).
As a result, an optimal solution path may have negligibly smaller expected path cost than many other sub-optimal solutions, especially in large graphs.
We therefore develop a bounded sub-optimal variant of \abbrRPT using focal search~\cite{pearl1982studies} technique, and call this variant Focal Routing with Probabilistic Terminals Star (\abbrBRPT).

Focal search is a bounded sub-optimal variant\cite{pearl1982studies} of A*, which further introduces a FOCAL list in addition to the OPEN list to store candidate states.
Let $f_{min}$ denote the minimum $f$-value of states in OPEN, and let $\epsilon \in [0,\infty)$ denote a sub-optimality bound.
FOCAL list is a priority queue containing states in OPEN whose $f$-value are within range $[f_{min},(1+\epsilon)f_{min}]$, and those states are prioritized based on a secondary heuristic function $h_F$.
In each search iteration, instead of selecting the first state from OPEN, focal search selects the first state from FOCAL for processing.

In \abbrBRPT, we define $h_F(s):= |V|-|A(s)|$ as the number of remaining unvisited vertices of a state, and FOCAL sorts the states based on $h_F$ from the minimum to the maximum.
In each search iteration, a state with the minimum $h_F$ value in FOCAL is selected for expansion.
By doing so, \abbrBRPT prefers states that have visited more vertices, and tends to find a solution earlier.
Since FOCAL is always a fraction of OPEN with $f$-value ranging between $[f_{min},(1+\epsilon)f_{min}]$, the resulting solution is guaranteed to be bounded sub-optimal (Theorem~\ref{tsppt:thm:bounded_suboptimal}), with the expected path cost $g$ no greater than $(1+\epsilon)g^*$ with $g^*$ being the true optimum of \abbrTSPPT.

\subsection{Analysis}
This section proves the solution optimality and bounded sub-optimality of \abbrRPT and \abbrBRPT.
To this end, we first prove the admissibility of the heuristic function in Lemma~\ref{tsppt:lem:heu_admiss}, and then the correctness of dominance pruning in Lemma~\ref{tsppt:lem:dom}.
Putting these together within the A* search framework, we show that completeness and solution optimality of \abbrRPT.

\begin{lemma}\label{tsppt:lem:heu_admiss}
    (Admissible Heuristic). The heuristic in Eq. (\ref{tsppt:eqn:heurstic2}) is admissible.
\end{lemma}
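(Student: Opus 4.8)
The plan is to show that $h(s)$ never exceeds the true optimal expected cost-to-go $h^*(s)$, defined as the minimum additional expected cost over all completions of the path at $s$ that visit every remaining unvisited vertex exactly once. First I would derive a closed form for $h^*(s)$. For any completion $(v,u_1,u_2,\dots,u_k)$, where $v=v(s)$ and $u_1,\dots,u_k$ is a permutation of the $k=k(s)$ unvisited vertices, appending this suffix and invoking Lemma~\ref{tsppt:lem:xi_rewrite} together with the recursions in Eq.~(\ref{tsppt:eq:g_prime})(\ref{tsppt:eq:q_prime}) yields an incremental cost of
\[
q(s)\Bigl[c(v,u_1) + (1-p(u_1))c(u_1,u_2) + \cdots + \textstyle\prod_{j=1}^{k-1}(1-p(u_j))\,c(u_{k-1},u_k)\Bigr],
\]
so that $h^*(s) = q(s)\cdot\min_{\sigma}[\cdots]$, where $\sigma$ ranges over permutations of the unvisited vertices.

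Next I would unfold the recursion for $\gamma$. Starting from the base case $\gamma(v,0)=0$ and repeatedly applying Eq.~(\ref{tsppt:eqn:heurstic1}), the nested $\min$'s can be collapsed into a single joint minimization over chains (a standard interchange justified by the additive cost structure), giving
\[
\gamma(v,k) = (1-p(v))\cdot\min_{\text{chain}}\Bigl[c(v,u_1) + (1-p(u_1))c(u_1,u_2) + \cdots + \textstyle\prod_{j=1}^{k-1}(1-p(u_j))\,c(u_{k-1},u_k)\Bigr],
\]
where the minimization ranges over length-$k$ chains $v,u_1,\dots,u_k$ subject only to the local constraint $u_{i+1}\neq u_i$ (and $u_1\neq v$). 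Substituting this into $h(s)=\frac{q(s)}{1-p(v(s))}\gamma(v(s),k(s))$, the leading $(1-p(v))$ cancels the $1/(1-p(v(s)))$ factor, yielding exactly $h(s)=q(s)\cdot\min_{\text{chain}}[\cdots]$ — the same bracketed objective and the same prefactor $q(s)$ as in $h^*(s)$.

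The comparison is then the conceptual crux: every completion admitted in $h^*(s)$ (a permutation of the unvisited vertices, all distinct) is also a feasible chain for $\gamma$, which merely forbids immediate repetition while otherwise permitting revisits and the reuse of already-visited vertices. Hence the chain minimization ranges over a larger feasible set (a superset), so $\min_{\text{chain}}[\cdots]\le\min_{\sigma}[\cdots]$ and therefore $h(s)\le h^*(s)$. Combined with $h(s)\ge 0$ (since all edge costs and probabilities are nonnegative and $\gamma\ge 0$), this establishes admissibility.

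The main obstacle I anticipate is bookkeeping rather than insight: I must verify that the unfolded $\gamma(v,k)$ produces the leading factor $(1-p(v))$ exactly once so that it cancels cleanly against $1/(1-p(v(s)))$, and that $k(s)=|V|-|A(s)|$ equals precisely the number of edges in a completing suffix, so the $k$-step table entry $\gamma(v,k)$ lines up with the $k$ remaining vertices. Getting these index counts right, and confirming that the relaxed chain objective carries the identical prefactor $q(s)$ as the true cost-to-go, is what makes the final inequality align term by term.
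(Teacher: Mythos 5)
Your proof is correct and follows essentially the same route as the paper's: both unfold the $\gamma$ recursion, use the cancellation of $(1-p(v(s)))$ against the prefactor in $h(s)$, and observe that the heuristic minimizes the identical objective over a strictly larger feasible set (chains forbidding only immediate repetition) than the true cost-to-go (permutations of unvisited vertices), hence $h(s)\leq h^*(s)$. The only difference is presentational — the paper applies the set-restriction inequality level-by-level inside the nested minima, whereas you flatten both sides into joint minimizations and compare feasible sets once.
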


\begin{proof}
Let $h^*(s)$ denote the true minimum cost of any path from a given state $s$ to a goal state.
then, for any state $s$, the condition for admissibility is $h(s)\leqslant h^*(s)$.
For $k\geq 1$,
\begin{align*}
    \gamma(v(s),k)&=(1-p(v))\cdot\min_{u,u\neq v}(c(v,u) + \gamma(u,k-1))\nonumber\\
    &\leq (1-p(v))\cdot\min_{u,u\notin A(s)}(c(v,u) + \gamma(u,k-1)).
\end{align*}

Along path $\pi$, let $q_{m}$ and $q_{m-1}$ denote the probability of \emph{not} finding a target at the first $m$ and $(m-1)$ vertices in $\pi$ respectively. Then,
\begin{align*}
    h(s)&=q_{m-1}\cdot\gamma(v(s),k)\nonumber\\
    &\leq q_m\cdot\min_{u,u\notin A(s)}(c(v,u) + \gamma(u,k-1))\nonumber\\
    &\leq q_m\cdot\min_{u,u\notin A(s)}\{c(v,u) + (1-p(u'))\cdot\\
    &\;\;\;\;\;\;\;\; \min_{u',u'\notin A(s)\cup \{u\}}(c(u,u')+...)\} = h^*(s)
\end{align*}

Since the condition $h(s)\leqslant h^*(s)$ holds for all states s, the heuristic function is admissible.
\end{proof}

To prove the correctness of the dominance rule in Def.~\ref{tsppt:def:state_dom}, we need the following assumption.
This assumption is only needed to complete the proof and does not affect the problem statement and the proposed algorithm, which is re-discussed in Remark \ref{remark:assume}.
\begin{assumption}\label{tsppt:assume:first_visit}
For a path $\pi$ that visits a vertex $v\in V$ more than once, when computing the $g$-cost, the probability of finding a target for the first visit $p_\text{first}(v)$ is same as the probability value $p(v)$ as defined in the problem statement, and the probability of finding a target at $v$ after the first visit $p_\text{after}(v)$ is always zero. 
\end{assumption}

\begin{lemma}\label{tsppt:lem:dom}
    For two states $s_1,s_2$ with $v(s_1)=v(s_2)$, if $s_1$ dominates $s_2$, $s_2$ cannot lead to a better solution than $s_1$.
\end{lemma}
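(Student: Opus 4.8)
The plan is to show that \emph{every} solution that extends $s_2$ can be matched by a solution extending $s_1$ whose expected cost is no larger; since this holds in particular for the optimal completion of $s_2$, the best solution reachable through $s_2$ is no better than the best reachable through $s_1$. Write $v:=v(s_1)=v(s_2)$, $A_1:=A(s_1)$, $A_2:=A(s_2)$, and $D:=A_1\setminus A_2$. The dominance hypothesis (Def.~\ref{tsppt:def:state_dom}) gives $g(s_1)\le g(s_2)$, $A_1\supseteq A_2$, and hence, by the preceding lemma, $q(s_1)\le q(s_2)$. A sharper form I would record first is $q(s_1)=q(s_2)\prod_{u\in D}(1-p(u))$, which follows directly from the definition of $q$ as a product of $(1-p(\cdot))$ factors over the visited set.

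First I would fix an arbitrary completion of $s_2$, i.e.\ an ordering $\sigma=(v,w_1,\dots,w_r)$ of the unvisited set $V\setminus A_2$, producing a full solution $\Pi_2$. Using Lemma~\ref{tsppt:lem:xi_rewrite} I would write $\xi(\Pi_2)=g(s_2)+\sum_{j=1}^{r} M^{(2)}_j\, c(w_{j-1},w_j)$ with $w_0:=v$, where the edge multiplier $M^{(2)}_j=q(s_2)\prod_{l<j}(1-p(w_l))$ is the probability of not having terminated before traversing that edge. The goal is to build from $\sigma$ a valid completion of $s_1$, i.e.\ an ordering of $V\setminus A_1=(V\setminus A_2)\setminus D$, whose total expected cost is at most $\xi(\Pi_2)$.

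The construction I would use is to let $s_1$ follow the \emph{same} tail $\sigma$, which forces a revisit of each vertex of $D$ (these lie in $A_1$ yet reappear in $\sigma$). Invoking Assumption~\ref{tsppt:assume:first_visit}, a revisit of $u\in D$ contributes termination probability zero, so its factor $(1-p_{\text{after}}(u))=1$ and the running multiplier does \emph{not} drop at revisited vertices. Combined with $q(s_1)=q(s_2)\prod_{u\in D}(1-p(u))$, a short telescoping computation shows that the multiplier of this walk at each edge $(w_{j-1},w_j)$ equals $M^{(2)}_j\cdot\prod_{u\in D,\ u\text{ not yet passed}}(1-p(u))\le M^{(2)}_j$; together with $g(s_1)\le g(s_2)$ this makes the walk's expected cost at most $\xi(\Pi_2)$. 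Finally I would delete the revisited $D$-vertices by shortcutting: since each carries factor $1$, removing it leaves all downstream multipliers unchanged and, by the triangle inequality, only decreases the local cost since $c(a,u)+c(u,b)\ge c(a,b)$. The result is a genuine Hamiltonian completion of $s_1$ with expected cost no greater than $\xi(\Pi_2)$, which proves the claim.

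I expect the main obstacle to be the multiplier bookkeeping: the two states have visited different vertex sets, so the discount factors attached to the shared future edges differ, and one must argue carefully that $q(s_1)\le q(s_2)$ exactly compensates for the $(1-p)$ factors that $s_2$'s first visits to $D$ contribute but $s_1$ lacks. The role of Assumption~\ref{tsppt:assume:first_visit} is precisely to make the shortcutting step clean: by zeroing the termination probability on revisits, the deleted $D$-vertices contribute a factor of $1$, so the shortcut becomes a pure triangle-inequality cost reduction with no side effect on the remaining multipliers. Once these per-edge comparisons are established, summing over all edges and adding the inequality $g(s_1)\le g(s_2)$ closes the argument.
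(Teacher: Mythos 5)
Your proof is correct and follows essentially the same route as the paper's: graft $s_2$'s completion tail onto $s_1$'s prefix, invoke Assumption~\ref{tsppt:assume:first_visit} so that the revisited vertices of $D=A(s_1)\setminus A(s_2)$ carry termination probability zero, and then shortcut them via the triangle inequality to obtain a genuine completion of $s_1$ that is no more expensive. Your explicit per-edge multiplier identity $M^{(1)}_j=M^{(2)}_j\prod_{u\in D,\ u\text{ not yet passed}}(1-p(u))\le M^{(2)}_j$ actually spells out the step the paper asserts only tersely (its claim $\xi(\pi')\le\xi(\pi)$); the only detail you leave implicit is the trivial case, handled explicitly in the paper, where a revisited vertex is the last vertex of the tail and is simply dropped.
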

\begin{proof}
    As illustrated in Fig.~\ref{tsppt:fig:Method_cutshort}, let $\pi_a(s_1),\pi_a(s_2)$ denote the path from $v_s$ to $v(s_1)$ and $v(s_2)$ respectively, and let $\pi_b(s_1)$ denote an arbitrary path from $v(s_1)$ that visits all unvisited vertices not in $A(s_1)$.
    In other words, $\pi_b(s_1)$ completes the path $\pi_a(s_1)$.
    Similarly, let $\pi_b(s_2)$ denote an arbitrary path from $v(s_2)$ as an arbitrary path beginning at $s_2$ that traverses all unvisited vertices not in the set $A(s_2)$, and the concatenated path $\pi=\pi_a(s_2) + \pi_b(s_2)$ is a solution.
    Let $\pi' = \pi_a(s_1) + \pi_b(s_2)$ denote a path that concatenates $\pi_a(s_1)$ and $\pi_b(s_2)$ together.
    Since $s_1$ dominates $s_2$, we know $g_1 \leq g_2$ and $\xi(\pi') \leq \xi(\pi)$.
    We need to show that for any possible $\pi'$, there exists a no-worse path $\pi'' = \pi_a(s_1) + \pi_b(s_1)$ than $\pi'$ (i.e., $\xi(\pi'')\leq \xi(\pi')$), and as a result, there is no need to search $\pi$ since $\xi(\pi'')\leq \xi(\pi') \leq \xi(\pi)$.
    
    To show that, since $A(s_1) \supseteq A(s_2)$, there can be vertices that are visited more than once in $\pi'$.
    For each of those vertices $u_k$ that are visited more than once in $\pi'$, if $u_k$ is the last vertex in $\pi$, then $u_k$ can be simply removed from $\pi'$ with a strictly lower expected cost than $\xi(\pi')$. Consequently, the new expected cost is less than $\xi(\pi)$.
    Otherwise ($u_k$ is not the last vertex in $\pi'$), let $u_{k-1}$ and $u_{k+1}$ denote the previous and the next vertex of $u_k$ in $\pi$.
    Then, a new path $\pi''$ can be obtained by shortcutting $u_k$, i.e., in $\pi''$, the vertices $u_{k-1}$ and $u_{k+1}$ are directly connected and $u_k$ is skipped.
    The resulting shortcut path $\pi''$ must have no-larger expected path cost than $\xi(\pi')$ because of Assumption \ref{tsppt:assume:first_visit} and that the graph edge costs satisfy the triangle inequality.
    Specifically, using the expression in Def.~\ref{tsppt:def:expected_cost}, 
    \begin{align}
        \xi(\pi') &= q_1c_{1,2} + q_2c_{2,3} + \cdots + q_{k-1}c_{k-1,k} + q_{k}c_{k,k+1} + \nonumber\\
        &\;\;\;\;\;\;\;\; \cdots + q_{N-1}c_{N-1,N} \label{proof:eq7} \\
        &= q_1c_{1,2} + q_2c_{2,3} + \cdots + q_{k-1}(c_{k-1,k} + c_{k,k+1}) + \nonumber\\
        &\;\;\;\;\;\;\;\; \cdots + q_{N-1}c_{N-1,N} \label{proof:eq8}\\
        &\geq q_1c_{1,2} + q_2c_{2,3} + \cdots + q_{k-1}(c_{k,k+1}) + \nonumber\\
        &\;\;\;\;\;\;\;\; \cdots + q_{N-1}c_{N-1,N} = \xi(\pi'') \label{proof:eq9}
    \end{align}
    Here, Eq.~\ref{proof:eq7} is equal to Eq.~\ref{proof:eq8} since $u_k$ is visited more than once and $p_k = p_\text{after}=0$.
    As a result, $q_{k} = q_{k-1}(1-p_k) = q_{k-1}$.
    Then, Eq.~\ref{proof:eq8} is equal to Eq.~\ref{proof:eq9} because of the triangle inequality of the edge costs $c_{k-1,k} + c_{k,k+1} \geq c_{k-1,k+1}$.
    We can keep doing so until all the vertices that are visited more than once in $\pi$ are shortcut.
    As a result, the resulting path $\pi''$ after shortcut has $\xi(\pi'') \leq \xi(\pi')$, which means $\pi'$ can be discarded.
\end{proof}

\begin{figure}
    \centering
    \includegraphics[width=0.95\linewidth]{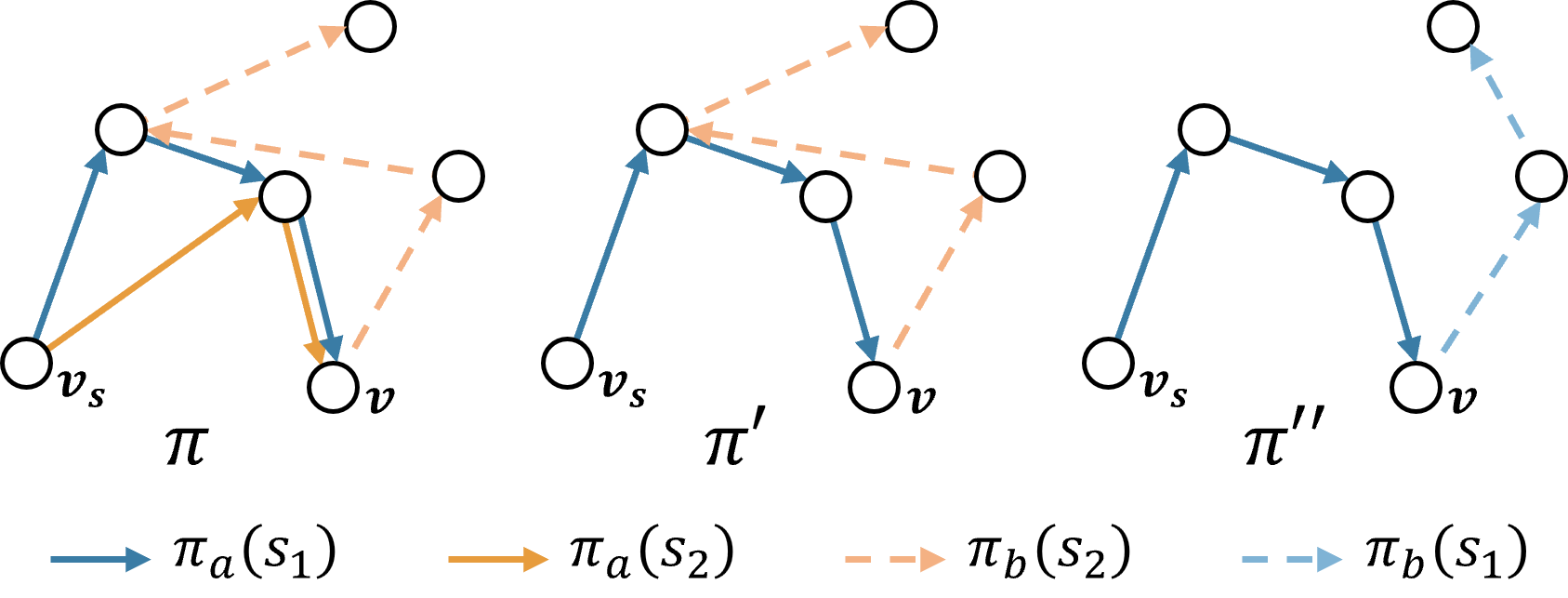}
    \caption{Cutshort process}
    \label{tsppt:fig:Method_cutshort}
\end{figure}

\begin{remark}\label{remark:assume}
In the problem definition of \abbrTSPPT, each vertex can only be visited once and there is no definition of a repeated visit.
To prove Lemma~\ref{tsppt:lem:dom}, however, we need to introduce the concept of repeated visits as a conceptual tool to complete the proof.
We thus use Assumption~\ref{tsppt:assume:first_visit} to define the probability value if a vertex is visited more than once.
Assumption~\ref{tsppt:assume:first_visit} will never take effect in the actual search process of \abbrRPT.
\end{remark}

The following three theorems summarize the main properties of \abbrRPT and \abbrBRPT, whose correctness relies heavily on the general best-first search framework (such as A* and its variants~\cite{pearl1982studies}).
We show the main ideas in the following proofs.

\begin{theorem}
    \abbrRPT is complete in a sense that, (i) for an unsolvable \abbrTSPPT instance, \abbrRPT terminates in finite time and reports failure; (ii) for a solvable \abbrTSPPT instance, \abbrRPT returns a feasible solution in finite time. 
\end{theorem}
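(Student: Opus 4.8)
The plan is to prove both claims by appealing to the standard completeness argument for best-first search, while carefully accounting for the two features that distinguish \abbrRPT from vanilla A*: the finiteness of the search space and the dominance pruning. First I would establish that the underlying state space is finite. Since each state $s=(v,g,q,A)$ has $A(s)\subseteq V$ and the search only ever extends paths to \emph{unvisited} successors (i.e.\ $v(s')\notin A(s)$ with $A(s')=A(s)\cup\{v(s')\}$), every path explored is a simple path in $G$. Hence the number of distinct states reachable by expansion is bounded by the number of simple paths from $v_s$, which is finite (crudely, at most $|V|!$). This is the key structural fact that guarantees termination.

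For claim (i), the unsolvable case, I would argue that OPEN can receive only finitely many states because of the finiteness just established, and each extracted state is either pruned (the iteration ends) or expanded at most once before being added to a frontier set $\mathcal{F}(v)$. Expansion strictly grows $|A|$ along any branch, so no branch can be extended beyond $|V|$ vertices, and no state can be re-expanded infinitely often because \texttt{IsPruned} and \texttt{FilterAndAddFront} together ensure that once a state is added to $\mathcal{F}(v)$, any later-generated state at $v$ that it dominates is discarded. Therefore OPEN is emptied after finitely many iterations, the \texttt{while} loop exits via its guard $\mathrm{OPEN}=\emptyset$, and the algorithm reaches the final \texttt{return failure} line in finite time. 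For the case of an unsolvable instance, the goal test $A(s)=V$ never succeeds (a complete graph always admits a Hamiltonian path from $v_s$, so strictly I would note that ``unsolvable'' here means no path satisfies the constraints, and in that event the goal test simply never fires), so failure is correctly reported.

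For claim (ii), the solvable case, the plan is to show that the goal test $A(s)=V$ must eventually fire before OPEN empties. The essential point is that dominance pruning never removes a state needed to reach \emph{some} optimal (or at least feasible) solution: by Lemma~\ref{tsppt:lem:dom}, whenever $s_1$ dominates $s_2$, the completion of $s_2$ can be replaced by a no-worse completion of $s_1$, so discarding $s_2$ preserves the existence of a feasible goal-reaching path through a surviving state. Combining this with finiteness, I would argue by induction along a surviving optimal solution path: the start state $s_o$ is never pruned (its frontier set starts empty), and at each step either the next state along the surviving path is generated and retained, or a dominating state is retained whose completion also reaches a full-coverage state. Since every expansion strictly increases $|A|$ on that branch and the space is finite, a state with $A(s)=V$ is eventually extracted and returned.

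The main obstacle I expect is the interaction between pruning and the priority ordering in claim (ii): I must rule out the pathological scenario in which dominance pruning, applied repeatedly, discards every state lying on every feasible completion, thereby severing all routes to a goal. Addressing this rigorously requires the invariant that the frontier sets $\mathcal{F}(v)$ always retain a state whose optimal completion cost is no larger than that of any discarded competitor at $v$ — which is exactly the content of Lemma~\ref{tsppt:lem:dom}, but I would need to phrase it as a loop invariant maintained across iterations (``at all times there exists a not-yet-expanded or frontier state that can be completed to a feasible solution'') and verify that neither \texttt{IsPruned} nor \texttt{FilterAndAddFront} can falsify it. Once that invariant is in place, completeness and the eventual firing of the goal test follow from the finiteness of the state space; the optimality claim (handled in the subsequent theorem) then rests additionally on the admissibility established in Lemma~\ref{tsppt:lem:heu_admiss}.
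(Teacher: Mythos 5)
Your proposal is correct and follows essentially the same route as the paper's own proof: termination from the finiteness of the state space (finitely many simple paths in a finite graph), and preservation of some feasible completion under dominance pruning via Lemma~\ref{tsppt:lem:dom}, within the systematic best-first search framework. Your write-up is in fact more careful than the paper's terse argument — notably the explicit loop invariant for the frontier sets and the observation that on a complete graph the ``unsolvable'' case is essentially vacuous — but the decomposition and key lemma are the same.
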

\begin{proof}
\abbrRPT conducts systematic best-first search as A*~\cite{pearl1982studies}, where in each search iteration a state with the minimum $f$-value in OPEN is selected for expansion, and when expanding a state, \emph{all} possible successor states are considered.
During the search, a state $s$ is pruned only when there is another state $s'$ that dominates $s$ in either \texttt{IsPrune} or \texttt{FilterAndAddFront}.
By the definition of dominance (Def.~\ref{tsppt:def:state_dom}) and Lemma~\ref{tsppt:lem:dom}, such pruning only removes sub-optimal solutions since for any future solution from $s$, there must exists a corresponding (better) solution from $s'$.
As a result, for a solvable problem instance, \abbrRPT never eliminates the only path to any vertex $v \in G$ if a better path has not been found yet, and terminates the search with a feasible solution.
For an unsolvable problem instance, since the graph $G$ is finite, there is a finite number of possible paths.
\abbrRPT terminates after enumerating all paths and reports failure.
\end{proof}

\begin{theorem}
    \abbrRPT finds an optimal solution for \abbrTSPPT.
\end{theorem}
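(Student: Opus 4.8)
The plan is to reduce the claim to the standard optimality argument for best-first search, relying on the two properties already established: admissibility of the heuristic (Lemma~\ref{tsppt:lem:heu_admiss}) and correctness of dominance pruning (Lemma~\ref{tsppt:lem:dom}). First I would observe that when \abbrRPT extracts a state $s$ with $A(s)=V$ and returns the reconstructed path $\pi$, the equality $g(s)=\xi(\pi)$ holds: by the Markovian update rules in Eq.~(\ref{tsppt:eq:g_prime})(\ref{tsppt:eq:q_prime}) together with Lemma~\ref{tsppt:lem:xi_rewrite}, the $g$-cost accumulated along the parent pointers is exactly the expected path cost of Def.~\ref{tsppt:def:expected_cost}. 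Thus it suffices to show that the first goal state extracted from OPEN has $g$-value equal to the true optimum, which I denote $g^*$.

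Next I would set up the A* invariant adapted to this state space. Let $\pi^*$ be an optimal solution with cost $g^*$, and consider the sequence of states $s^*_0, s^*_1, \dots$ that \abbrRPT would generate by following $\pi^*$. The key claim is that, at every iteration before a goal is returned, OPEN contains some state $s'$ that can be extended to an optimal solution and satisfies $f(s') \le g^*$. For such a state, admissibility gives $f(s') = g(s') + h(s') \le g(s') + h^*(s') \le g^*$, where $h^*(s')$ is the true minimum cost-to-go and the final inequality holds because $g(s')$ plus the cost-to-go along $\pi^*$ from $s'$ equals $g^*$ while $h^*(s')$ lower-bounds that cost-to-go.

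The main obstacle, and the step I would treat with the most care, is showing that dominance pruning never destroys this invariant. In plain A* the prefix state $s^*_i$ is simply never removed, but here it may be discarded in \texttt{IsPruned} or \texttt{FilterAndAddFront} by some dominating state $\hat{s}$. I would handle this with Lemma~\ref{tsppt:lem:dom}: if $s^*_i$ is pruned by $\hat{s}$, then $\hat{s}$ dominates $s^*_i$ and can lead to a solution no worse than the one obtained by completing $\pi^*$ from $s^*_i$; hence $\hat{s}$ is itself a state in OPEN (or its frontier set) that extends to an optimal solution with $f(\hat{s}) \le g^*$. Chaining this argument across every pruning event shows the invariant is preserved, so an optimal-extendable state with $f$-value at most $g^*$ always remains available until a goal is returned.

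Finally I would close the argument. Because \abbrRPT always extracts the state with minimum $f$-value, the goal state $s_g$ it returns satisfies $f(s_g) \le f(s') \le g^*$; since goal states have $h = 0$, this yields $g(s_g) = f(s_g) \le g^*$. Conversely, $g(s_g) = \xi(\pi_g) \ge g^*$ because $g^*$ is by definition the minimum expected cost over all feasible solutions. Combining the two inequalities gives $g(s_g) = g^*$, so the returned path is optimal. The preceding completeness theorem guarantees that a goal state is eventually extracted whenever the instance is solvable, which completes the proof.
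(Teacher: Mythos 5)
Your proposal is correct and follows essentially the same route as the paper's proof: both combine the admissibility of the heuristic (Lemma~\ref{tsppt:lem:heu_admiss}) and the correctness of dominance pruning (Lemma~\ref{tsppt:lem:dom}) with the standard best-first-search lower-bound argument, concluding at a popped goal state where $h=0$ forces $g(s)=f(s)\leq\xi_*$ and hence equality with the optimum. The only difference is one of detail: you spell out the OPEN-invariant and its preservation across pruning events explicitly, whereas the paper compresses that step into an appeal to the classical A* analysis of~\cite{pearl1982studies}.
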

\begin{proof}
    \abbrRPT systematically enumerates all paths in the graph $G$, and by Def.~\ref{tsppt:def:state_dom} and Lemma~\ref{tsppt:lem:dom}, dominance pruning never eliminates a path along an optimal solution.
    With an admissible heuristic (Lemma~\ref{tsppt:lem:heu_admiss}), the $f$-value of any state $s$ popped from OPEN is a lower bound of the true optimal solution cost~\cite{pearl1982studies}, i.e., $f(s)\leq \xi_*$ with $\xi_*$ being the true optimal cost of an optimal solution.
    At termination, a state $s$ that has visited all vertices is popped.
    Since all vertices are visited and there is no additional vertex to be visited, $h(s)=0$ and $g(s) = f(s) \leq \xi_*$.
    As $\xi_*$ is the true optimum, it must be the case $g(s) = \xi_*$ and \abbrRPT finds an optimal solution.
\end{proof}

\begin{theorem}\label{tsppt:thm:bounded_suboptimal}
    Let $\xi_*$ denote the true optimal cost value for any solvable \abbrTSPPT problem instance, \abbrBRPT finds a $(1+\epsilon)$ bounded sub-optimal solution $\pi$ for \abbrTSPPT, i.e., $\xi(\pi) \leq (1+\epsilon)\xi^*$.
\end{theorem}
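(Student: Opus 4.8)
The plan is to follow the standard bounded-suboptimality argument for focal search~\cite{pearl1982studies}, adapting it to the state space and pruning rules of \abbrBRPT. The central quantities are $f_{min}$, the minimum $f$-value over all states in OPEN at a given iteration, and the invariant $f_{min} \leq \xi_*$ that I need to hold throughout the search. First I would establish this invariant. At any iteration before termination, I claim there is a state in OPEN lying on some optimal solution path. This follows from the completeness argument together with Lemma~\ref{tsppt:lem:dom}: dominance pruning in \texttt{IsPruned}($s$) and \texttt{FilterAndAddFront}($s$) only discards states that cannot lead to a strictly better solution than a retained dominating state, so at least one optimal path always has a descendant state present in OPEN. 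Letting $s^\dagger$ denote such an on-optimal-path state, the admissibility of the heuristic (Lemma~\ref{tsppt:lem:heu_admiss}) gives $f(s^\dagger) = g(s^\dagger) + h(s^\dagger) \leq \xi_*$, and therefore $f_{min} \leq f(s^\dagger) \leq \xi_*$.

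Next I would analyze the termination step. Since FOCAL contains exactly those OPEN states whose $f$-value lies in $[f_{min},(1+\epsilon)f_{min}]$, and \abbrBRPT always expands from FOCAL, any state $s$ that is extracted and triggers the return (i.e., $A(s)=V$) satisfies $f(s) \leq (1+\epsilon)f_{min}$. Because $s$ has visited all vertices, there is no remaining vertex to visit, so $k(s) = |V|-|A(s)| = 0$ and $h(s)=0$ by Eq.~(\ref{tsppt:eqn:heurstic2}); hence $g(s) = f(s)$. The reconstructed path $\pi$ therefore has expected cost $\xi(\pi) = g(s) = f(s) \leq (1+\epsilon)f_{min}$. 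Combining the two steps yields $\xi(\pi) \leq (1+\epsilon)f_{min} \leq (1+\epsilon)\xi_*$, which is the desired bound.

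The completeness step (that a goal state is eventually extracted) and the correctness of pruning carry over essentially verbatim from the two preceding theorems, since \abbrBRPT differs from \abbrRPT only in which list supplies the state to expand and never alters OPEN, the dominance tests, or the heuristic. I expect the main obstacle to be rigorously justifying the invariant $f_{min}\leq\xi_*$ in the presence of dominance pruning. Unlike textbook focal search, where only duplicate detection is used, here a state on an optimal path could be pruned by a dominating state on a \emph{different} partial path. I would therefore argue, via Lemma~\ref{tsppt:lem:dom} and induction over the sequence of expansions, that whenever such a pruning occurs the dominating state (or one of its descendants) still admits an optimal completion, so that the lower-bounding property $f_{min}\leq\xi_*$ furnished by some OPEN state is preserved at every iteration.
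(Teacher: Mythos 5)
Your proposal is correct and follows essentially the same approach as the paper: the paper's (much terser) proof likewise combines the focal-selection property $f(s) \leq (1+\epsilon)f_{min}$ at termination with the implicit invariant $f_{min} \leq \xi_*$ furnished by the admissible heuristic (Lemma~\ref{tsppt:lem:heu_admiss}) and the correctness of dominance pruning (Lemma~\ref{tsppt:lem:dom}). The extra care you take in justifying that $f_{min} \leq \xi_*$ survives dominance pruning is a detail the paper leaves to its preceding completeness and optimality theorems, so your write-up is simply a more rigorous version of the same argument.
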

\begin{proof}
    The use of focal search~\cite{pearl1982studies} allows \abbrBRPT to select a state $s$ with $f$-value within range $[f_{min},(1+\epsilon)f_{min}]$ with $f_{min}$ being the minimum $f$-value of any state in OPEN.
    At termination, the popped state $s$ is guaranteed to have $g$-cost $g(s) = f(s) \leq (1+\epsilon) f_{min} \leq  (1+\epsilon) \xi_*$, which ensures the $(1+\epsilon)$ bounded sub-optimality of the returned solution.
\end{proof}

	\section{Integer Programming}\label{TSPPT:sec:ip}
	Integer linear programming is one of the most popular exact approaches for conventional HPP and TSP.
This section presents our attempt to formulate the \abbrTSPPT as an integer program, which turns out to be nonlinear.
We begin with a naive formulation that is relatively straightforward and then introduce an improvement with auxiliary variables, which are both compared against \abbrRPT in the experiments.

\subsection{Naive Formulation}
Let the vertices in $V$ be organized in a fixed order.
Let $x_{k,i} \in \{0,1\}$ denote a binary variable, where $k = 1,\cdots,N$ stands for the visiting order of vertex $v_i \in V$ in the solution.
There are in total $N^2$ binary variables, and let $X \in \{0,1\}^{N\times N}$ denote a square binary matrix consists of $x_{k,i}$, with the $k$-th column being $x_k=[x_{k,1},x_{k,2},\cdots,x_{k,N}]$.
Each $x_k$ has only one element taking value $1$ with all other elements taking value $0$, indicating which vertex is visited as the $k$-th step in the robot's path.
We now define a few vectors, and all vectors are column vectors unless otherwise clarified.
Let $p = [p(v_1),p(v_2),\cdots,p(v_N)]$ denote the probability vector.
Let $e$ denote a vector of length $N$ with all elements being one.
Let $C \in \mathbb{R}^{N\times N}$ denote a cost matrix with the entry in the $i$-th row and the $j$-th column being the edge cost $c(v_i,v_j)$.

To formulate the expected path cost, we introduce the following variable $l_k$ to describe the path length when the robot visits the $(k+1)$-th vertex in its path.
\begin{align}
    l_k = \sum_{i=1}^{k} x_i^T C x_{i+1}, k=1,2,\cdots,N-1
\end{align}
Note that $x_i^T C x_{i+1}$ selects the edge that is traversed by the robot when moving from the $i$-th vertex to the $(i+1)$-th vertex in its path.

Then, the objective function, i.e., the expected path cost, to be minimized is:
\begin{align}
&\xi(x) = (1-p^T x_1)p^Tx_2 l_1 + (1-p^T x_1)(1-p^T x_2)p^Tx_3 l_2 \nonumber\\
     & + \cdots + \prod_{j=1}^{N-1}(1-p^T x_j)l_{N-1} \nonumber\\
&= \sum_{k=1}^{N-2} \left( \prod_{j=1}^{k} (1-p^T x_j) p^Tx_{k+1} l_k \right) + \prod_{j=1}^{N-1}(1-p^T x_j)l_{N-1}
\end{align}
The constraints are:
\begin{align}
e^T X &= e^T, k=1,2,\cdots,N\label{cstr:layer}\\
Xe &= e, k=1,2,\cdots,N\label{cstr:unique}\\
x_1&=x_s\label{cstr:init}
\end{align}
where Constraint (\ref{cstr:layer}) ensures that only one vertex is visited as the $k$-th step $k=1,2,\cdots,N$, and Constraint (\ref{cstr:unique}) ensures all selected vertices are unique and there is no repeated visit of any vertex.
Finally, $x_s$ is a binary vector of length $N$ with only one component being $1$ corresponding to the start vertex $v_s$, and Constraint (\ref{cstr:init}) requires the first selected vertex must be the start vertex.
The entire integer program is:
\begin{align*}
    \min_{x} &\;\;\xi(x)\\
    \text{subject to }& (\ref{cstr:layer}) (\ref{cstr:unique}) (\ref{cstr:init}), 
\end{align*}
which is a nonlinear program due to the product terms.

\subsection{Transformed Formulation}
This naive formulation has a complicated objective function $\xi(x)$ with long product terms.
We propose the following transformed method to simplify these terms using an idea similar to the one in Sec.~\ref{TSPPT:sec:method}.
We introduce a variable $q_k$ to denote the probability of not finding a target at any vertex among the first $k$ vertices along the path. 
\begin{align}
    q_{k}= \prod_{j=1}^{k}(1-p^T x_j)
\end{align}

With $q_k$, the expected path cost can be transformed into:
\begin{align}
    &\xi(x) = q_{1}x_1^T C x_{2} + q_{2}x_2^T C x_{3} + \dots +q_{N-1}x_{N-1}^T C x_N \nonumber\\
    &= \sum_{i=1}^{N-1} q_{i}x_i^T C x_{i+1}
\end{align}

The constraints for this new formulation remain the same as those defined for the naive formulation (i.e., Constraints(\ref{cstr:layer}), (\ref{cstr:unique}), and (\ref{cstr:init})).
The program is still nonlinear.

    \section{Systems and Applications}\label{TSPPT:sec:system}
    To apply our \abbrRPT and \abbrBRPT in real-world target search applications, we propose the Hierarchical planning system for Autonomous Target Search (\abbrHATS), which uses \abbrTSPPT to iteratively plan paths for the robot and integrates perception, mapping and other modules for real robot deployment.
We showcase the use of \abbrHATS framework in two distinct application scenarios.

Section \ref{TSPPT:sec:HATS-L} details \abbrHATS-L for \emph{lifelong} target search with unrealiable sensors in \emph{known} environments, which combines the Bayesian estimation approach~\cite{thrun_probabilistic_2005} to iteratively improve the belief about where the target object is during navigation.
Specifically, a prior map about the environment is known and an initial target probability distribution is available.\footnote{A uniform distribution is used when no clue where the target might be.}
The robot is equipped with a noisy sensor that is subject to false positive and false negative detection and is described by a probabilistic sensor model.
As a result, \abbrHATS needs to iteratively generate new \abbrTSPPT instances with the updated probability every time after a sensor detection, and calls \abbrRPT to re-plan the path.
    
Section \ref{TSPPT:sec:HATS-U} describes \abbrHATS-U for integrated exploration and target search in unknown environments, where the robot needs to simultaneously explore an \emph{unknown} environment and search for a target, terminating when the target is found.
Specifically, when no prior map is available, the robot has to autonomously explore the environment to incrementally build a map using SLAM algorithms while searching for the target.

\subsection{\abbrHATS System Overview}
\abbrHATS is a hierarchical and cascaded system (Fig. \ref{tsppt:fig:Pipeline}) consists of four modules: (i) Perception, (ii) Mapping, (iii) Global Planning, and (iv) Local Planning.
\begin{figure}[t]
  \centering
	\includegraphics[width=\linewidth]{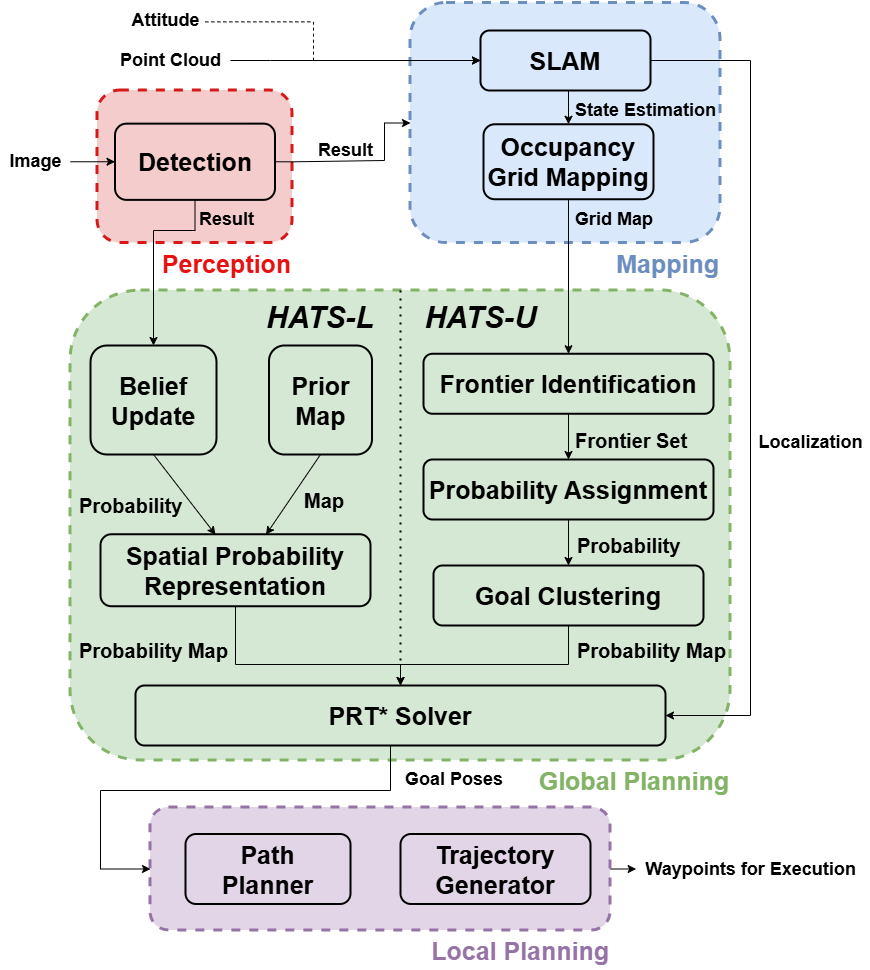}
 	\caption{\small The architecture and workflow of the Hierarchical planning system for Autonomous Target Search (HATS).
    HATS iteratively runs the workflow that begins by collecting sensory data and eventually outputs waypoints for execution.
    The two variants HATS-L and HATS-U differ in their global planning.}
    \label{tsppt:fig:Pipeline}
\end{figure}

\subsubsection{Perception}
The Perception module provides sensory data for target detection and for the subsequent Mapping module. 
We use a LiDAR and a camera as the primary sensors on the robot.
The LiDAR provides real-time point cloud data for mapping, localization, and navigation, while the camera captures images within a fixed Field Of View (FOV) for target detection. For the robustness of SLAM, the system may integrate attitude data from an Inertial Measurement Unit (IMU), which is optional for the perception module.

The Perception module continuously executes onboard target detection and verification using real-time image data from the camera.
In \abbrHATS-L, the perception module constantly updates the likelihood of finding a target at each location after each detection.
In \abbrHATS-U, this constant monitoring has an interruption mechanism: when the confidence score of the detected target exceeds a predefined detection threshold, the robot immediately stops its motion and claims success for the target search task.
Conversely, if the threshold is not met, the robot proceeds with the planned path segment, continuing its detection and movement towards the next waypoint. 

\subsubsection{Mapping}
\label{tsppt:sec:mapping}
This module processes LiDAR point cloud data to generate both state estimations and map representations for downstream planning modules.
It runs a SLAM algorithm which produces both a point cloud map of the surrounding environment and an SE(2) pose estimate by using LiDAR (and IMU) data.
We adopt LIO-mapping~\cite{ye2019tightly} as the underlying SLAM algorithm, and other SLAM methods can also be applied within \abbrHATS.

We employ SLAM to construct an occupancy grid map from point clouds where each cell is marked as one of the following three labels: ``unknown'', ``occupied'' or ``free'', and ``occupied'' and ``free'' both belong to the category of ``known'', indicating whether the cell is occupied by an obstacle or not.
With the help of SLAM, we update the occupancy grid by projecting sensor measurements into the map’s coordinate frame and updating each cell’s label through ray tracing.
To handle point cloud noise, the occupancy status of each cell is determined by evaluating the density or ratio of points contained within its volume in a similar way as in~\cite{hornung2013octomap}.

This occupancy grid is the input to the subsequent Global Planning module.
As the robot explores, the occupancy grid grows incrementally until a target is found or the environment is fully explored without finding any target.

\subsubsection{Global Planning}
\label{tsppt:alg:global_planner}
This module generates a graph based on the occupancy grid map 
and computes the probability of each vertex in the graph to formulate a \abbrTSPPT problem, which is then solved to plan a global path $\pi$ for the robot.
Each vertex in this path $\pi$ corresponds to a SE(2) pose in the environment, which will be a goal pose for the subsequent local planning.
In other words, this global path determines the visiting order of goal poses in the environment.

The Global Planning module in \abbrHATS-L and \abbrHATS-U differs (Fig.~\ref{tsppt:fig:Pipeline}).
\begin{itemize}
\item In \abbrHATS-L, an environment occupancy grid is built in advance.
A fixed graph $G$ embedded in the grid is provided, as well as the probability distribution of the vertices, which are used to formulate the \abbrTSPPT problem.
Every time the robot reaches a vertex, based on the sensor reading, the probability of that vertex is updated using the Bayesian rule (Sec.~\ref{TSPPT:sec:HATS-L}) and a new \abbrTSPPT is generated and solved to determine the next goal pose of the robot.
\item In \abbrHATS-U, the environment is unknown, and the robot has to incrementally build a point cloud map using a SLAM algorithm, and generate the corresponding occupancy grid during navigation.
Based on the grid, the frontier points from the boundary between known and unknown spaces are extracted.
Then, as elaborated in Sec.~\ref{TSPPT:sec:HATS-U}, these frontiers are used to compute the probability distribution and then clustered to generate a graph $G$, which then leads to an \abbrTSPPT problem. Solving this problem yields the goal pose for the subsequent module.
\end{itemize}

\subsubsection{Local Planning}
While the Global Planning determines the sequence of goal poses $(x,y,\theta)$ to be visited, the Local Planning is responsible for both (i) planning the detailed geometric path to connect those goal poses and (ii) generating control command to follow the geometric path.
Here, we use A*-based search in the occupancy grid to plan the geometric path to connect the goal poses from the global planning, and we use the Dynamic Window Approach~\cite{fox2002dynamic} for local path following, which allows avoiding unexpected dynamic obstacles (such as pedestrians) in a local and reactive fashion.

\subsection{Lifelong Target Search with Bayesian Update}
\label{TSPPT:sec:HATS-L}
For lifelong search with noisy sensors, as Fig. \ref{tsppt:fig:Pipeline} shows, \abbrHATS-L employs a Bayesian filter\cite{thrun_probabilistic_2005} for probabilistic map update.
We first introduce the general form of the Bayesian filter and demonstrate its specific application within our experiment.
Let $b_t(v) \in [0,1)$ denote the estimated probability values of vertex $v \in V$ in the graph $G$ at time step $t$, which is the probability of the target object being at $v$ (i.e., $b_t(v) = b_t(\text{object is at } v)$).
Let $\overline{b}_t(v)$ denote the corresponding prior belief at time $t$ before incorporating the sensor measurement. Furthermore, let $u_t$ and $z_t$ denote the control input and measurement at time $t$, respectively. In our experiment, $u_t$ represents the robot's transition between vertices, which is an edge in the graph traversed by the robot.
The belief refinement process involves two steps:

\subsubsection{Prediction Step}
As formulated in Eq.~(\ref{tsppt:eq:prediction}), the prior belief $\bar{b}_t(v)$ is computed by incorporating the control input $u_t$ using the discrete Chapman-Kolmogorov equation\cite{thrun_probabilistic_2005}:
\begin{equation}
    \bar{b}_{t}(v) = \sum_{v' \in V} p(v \mid u_{t}, v') \cdot b_{t-1}(v'), \forall v \in G.
    \label{tsppt:eq:prediction}
\end{equation}
Here, $p(v \mid u_{t}, v')$ denotes the transition probability for the target object to move from $v'$ to $v$ when the robot takes control $u_t$.
Since the target object is static in our setting, the transition probability has a simple form:
\begin{align}
    &p(v \mid u_{t}, v') = 1, \text{if } v= v'\nonumber\\
    &p(v \mid u_{t}, v') = 0, \text{otherwise, i.e., }v\neq v',
\end{align}
regardless of the control $u_t$ taken by the robot.\footnote{A possible future work is to consider more complicated transition probability and extend our \abbrHATS to find dynamic target objects.}
Consequently, the prediction step has no effect on the belief, and the prior belief at time $t$ is the same as the posterior belief from the previous time step $t-1$: 
\begin{equation}
    \overline{b}_t(v) = {b}_{t-1}(v), \forall v\in G.
\end{equation}

\subsubsection{Update Step}
The posterior belief $b_t(v)$ corrects the prior $\overline{b}_t(v)$ with the current measurement $z_t$ via Bayes rule, when the robot reaches vertex $v$ after taking the control $u_t$:
\begin{equation}
    b_{t}(v) = \eta p(z_{t} \mid v) \cdot \bar{b}_{t}(v).
    \label{tsppt:eq:update}
\end{equation}
Here, $p(z_{t} \mid v)$ denotes the observation model, and $\eta$ is the normalization factor so that the posterior belief sums to one.
Specifically, the sensor measurement $z_t\in\{0,1\}$ is binary and tells whether the target object is found ($z_t=1$) or not ($z_t=0$).
Let $\neg v$ denote that the target object is not at $v$, then the observation model is
\begin{align}
    &p(z_{t}=1 \mid v) = \alpha_1, &\forall v\in G \\
    &p(z_{t}=0 \mid v) = 1-\alpha_1, &\forall v\in G \\
    &p(z_{t}=1 \mid \neg v) = \alpha_2, &\forall v\in G\\
    &p(z_{t}=0 \mid \neg v) = 1-\alpha_2, &\forall v\in G,
\end{align}
where $\alpha_1,\alpha_2 \in [0,1]$ are real numbers indicating the true positive rate and false positive rate of the sensor, and the normalization factor is:
\begin{equation}
\eta = \frac{1}{P(z_t \mid v)\bar{b}_t(v) + P(z_t \mid \neg v)(1 - \bar{b}_t(v))}.
\end{equation}
This observation model indicates the likelihood for the sensor to return $0$ (or $1$) at any vertex $v\in G$ given that the object is at $v$ (or not at $v$).
After the update step, $b_t(v)$ denotes the probability of the target object being at $v$, i.e., $b_t(v) = b_t(\text{object is at } v)$. 

\begin{remark}
If there is only one target, the update step (Eq.~\ref{tsppt:eq:update}) needs to update the probability of all vertices. Intuitively, detecting the target object at one vertex $v$ will increase the probability value at $v$ while lowering the probability value at other vertices $u\neq v, u\in G$.
In our setting, we do not consider the case with only one target.
As a result, the update step (Eq.~\ref{tsppt:eq:update}) only needs to be applied to the vertex $v$ that is reached by the robot after taking control $u_t$.
\end{remark}

\subsection{Target Search in Unknown Environments}
\label{TSPPT:sec:HATS-U}
As shown in Fig.~\ref{tsppt:fig:Pipeline}, \abbrHATS-U dynamically constructs the graph $G$ and updates probability distributions via the following three steps:
\subsubsection{Frontier Identification}
Based on the occupancy grid built by the Mapping module (Sec.~\ref{tsppt:sec:mapping}), \abbrHATS-U extracts frontiers, i.e., the boundary cells between unknown and free cells.
Note that the boundary cells between unknown and occupied cells are ignored since occupied space are obstacles and cannot be accessed by the robot.

\subsubsection{Probability Assignment}
\begin{figure}
    \centering
    \includegraphics[width=0.94\linewidth]{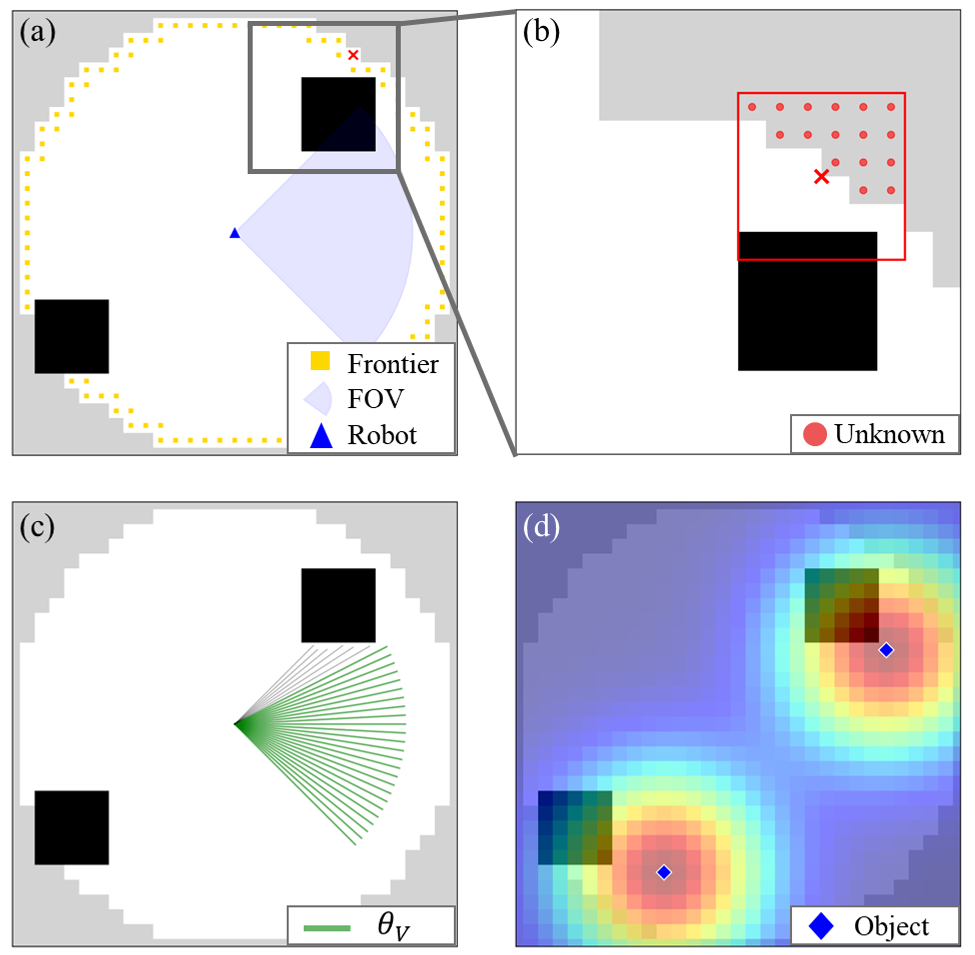}
    \caption{An example of probability assignment. (a) Occupancy grid. (b)-(d) Three factors of information gain: $\phi_u(x),\phi_g(x),\phi_o(x)$}
    \label{tsppt:fig:Probability map}
\end{figure}
\abbrHATS-U then prioritizes these frontier cells by computing the ``information gain'' for each of them using a weighted average of three factors.
As depicted in Fig. \ref{tsppt:fig:Probability map}, these three factors are: the density of unknown space $\phi_u$, the geometric visibility $\phi_g$, and prior knowledge of object locations $\phi_o$ (o for object), and the final probability is:
\begin{align}
    p(x) = w_u\phi_u(x)+w_g\phi_g(x) + w_o\phi_o(x),
\end{align}
where $w_u,w_g,w_o$ are non-negative weighting factors so that $p(x)$ is within range $[0,1]$.
Specifically:
\begin{itemize}
     \item The density of unknown space $\phi_u(x)$ quantifies the ratio of unknown cells to the total number of cells in a neighborhood around a frontier cell $x$. For example, we use a square neighborhood of fixed size centered on $x$ in our experiments (Fig.~\ref{tsppt:fig:Probability map} (b)).
     \item For any cell $x$ in the occupancy grid, the geometric visibility $\phi_g(x)$ describes the amount of unknown cells that are visible from $x$ after considering the occlusion by static obstacles.
     As Fig.~\ref{tsppt:fig:Probability map} (c) shows, let $\theta_{FOV}$ denote the sensor's Field Of View (FOV), and let $\theta_{V}$ denote the total angle of the visible unknown space from $x$ (that are not blocked by obstacles), which can be computed by ray casting. Then, $\phi_g(x)$ is the ratio of $\theta_{V}$ over $\theta_{FOV}$, within a range between 0 and 1.
     As the robot moves, we compute the $\theta_{V}$ based on the robot's current pose.
     \item The prior knowledge $\phi_o$ is a potential field over the entire workspace, and the value $\phi_o(x)$ for any cell $x$ in the occupancy grid describes the likelihood for the object to be at $x$.
     As shown in Fig.~\ref{tsppt:fig:Probability map} (d), we use a mixture of Gaussians to describe the prior knowledge of object locations.
     Let $X_o$ denote a set of possible object locations. In other words, for each location $x_k \in X_o$, $p_k(x) \sim \mathcal{N}(\mu_k,\sigma_k)$ denotes a Gaussian with mean value $\mu_k=x_k$ and a covariance matrix $\sigma_k$ describing the uncertainty, where $p_k(x)$ is the probability value of any cell $x$ in the workspace based on that Gaussian.
     Then, $\phi_o(x)=\max_{x_k \in X_o} p_k(x)$ is equal to the maximum probability value among these Gaussians.
\end{itemize}

\subsubsection{Goal Clustering}
It is possible to have a lot of frontier cells near each other during the target search.
We use a weighted Mean Shift~\cite{comaniciu2002mean} algorithm to cluster these frontiers.
In our work, the standard algorithm is modified to prioritize regions of high target probability. 
Specifically, let $\hat{c}(x)$ denote the cluster center for each frontier cell $x$, with its initial value set to the coordinates of $x$ itself. Then, for a given neighborhood $N$ of $\hat{c}(x)$, a new cluster center $\hat{c}_{new}(x)$ is calculated as the weighted mean of the neighboring coordinates $x_i \in N$:
\begin{align}
   \hat{c}_{new}(x) = \frac{\sum_{i\in N} x_i \cdot \omega_i }{\sum_{i\in N}  \omega_i}, 
\end{align}
where the weighting coefficient, $\omega_i$, accounts for both the probability value and spatial proximity of the frontier cell:
\begin{align}
    \omega_{i}=p(x) \cdot \exp \left(-\frac{\left|\hat{c}-x_{i}\right|^{2}}{2 \delta^{2}}\right).
\end{align}

The clustering process continues iteratively until convergence, which is determined by a stopping condition. Specifically, the update of the cluster center $\hat{c}_{new}(x)$ is considered complete when the change between successive cluster centers is below a predefined threshold $\delta_c$:
\begin{align}
    |\hat{c}_{new}(x)-\hat{c}(x)|<\delta_{c}
\end{align}
After convergence, a post-processing merging step is applied. It merges cluster centers that are the same or within a defined distance threshold $\delta_{dist}$, which reduces the final dimension of the graph $G$.

	\section{Experimental Results}\label{TSPPT:sec:result}
	Our experiments include evaluation of both algorithms for \abbrTSPPT and systems for target search.
\begin{itemize}
    \item To evaluate our algorithms (Sec.~\ref{TSPPT:sec:method}), we tested the proposed \abbrRPT algorithm against the Integer Program from Sec.~\ref{TSPPT:sec:ip} and conducted an ablation study of \abbrRPT to understand the impact of heuristic and focal search. Finally, we compared our approach with several baseline methods in terms of the runtime and solution quality.
    \item To evaluate our \abbrHATS systems (Sec.~\ref{TSPPT:sec:system}), we compared both \abbrHATS-L and \abbrHATS-U against several baselines in both simulations in Gazebo and real robot experiments.
\end{itemize}

As shown in Fig.~\ref{tsppt:fig:example}, we used two types of datasets including (i) a public dataset of city-like networks from TSPLIB \cite{reinelt1991tsplib}, a popular library for TSP-related problems, and (ii) a synthetic dataset.
The synthetic dataset consists of graphs with varying number of vertices that are randomly sampled from a 2D plane. The sampling area is set to a $500 \times 500$ region for small graphs ($|V| \le 40$) and expanded to $5000 \times 5000$ for larger instances ($|V| > 40$). In all cases, the distance between any two vertices must exceed 5.
All methods were implemented in C++ and run on a laptop with an Intel Core i7-10875H 2.30 GHz CPU and 16 GB of RAM.

\begin{figure}
    \centering
    \includegraphics[width=0.94\linewidth]{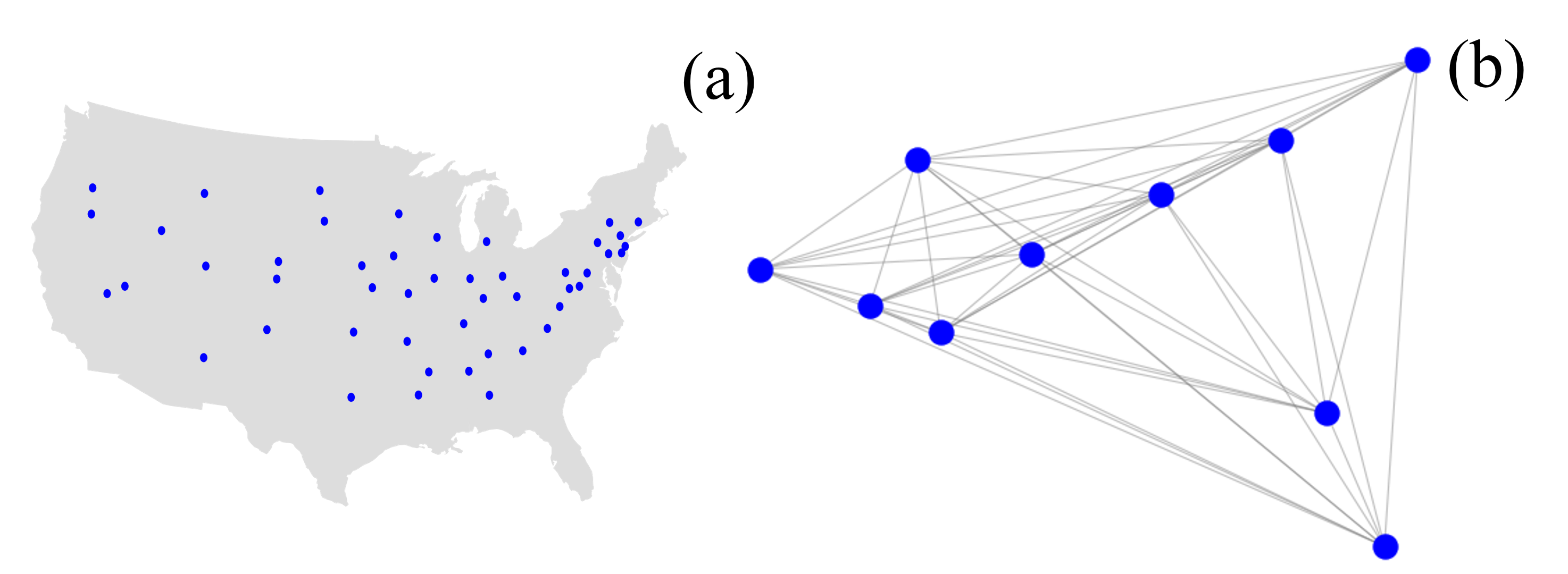}
    \caption{Two types of datasets for experiment. (a) is the att48 instance from public TSPLIB dataset: 48 U.S. capital cities. (b) is a an example of the synthetic dataset: a randomly generated graph with 10 vertices.}
    \label{tsppt:fig:example}
\end{figure}

\subsection{\abbrRPT vs Integer Program (IP)}
\subsubsection{Test Settings}
In Sec.~\ref{TSPPT:sec:ip}, we presented two IP formulations for the HPP-PT: the naive formulation (IP) and the transformed formulation($\text{IP}_{tf}$). 
Here, we tested five instances from the public TSPLIB dataset~\cite{reinelt1991tsplib} of varying graph sizes:  gr17, gr21, gr24, fri26 and bayes29, where the number of vertices $N$ are 17, 21, 24, 26 and 29 respectively.
We used Gurobi~\cite{gurobi} as the solver for IP.

\begin{figure}
    \centering
    \includegraphics[width=0.94\linewidth]{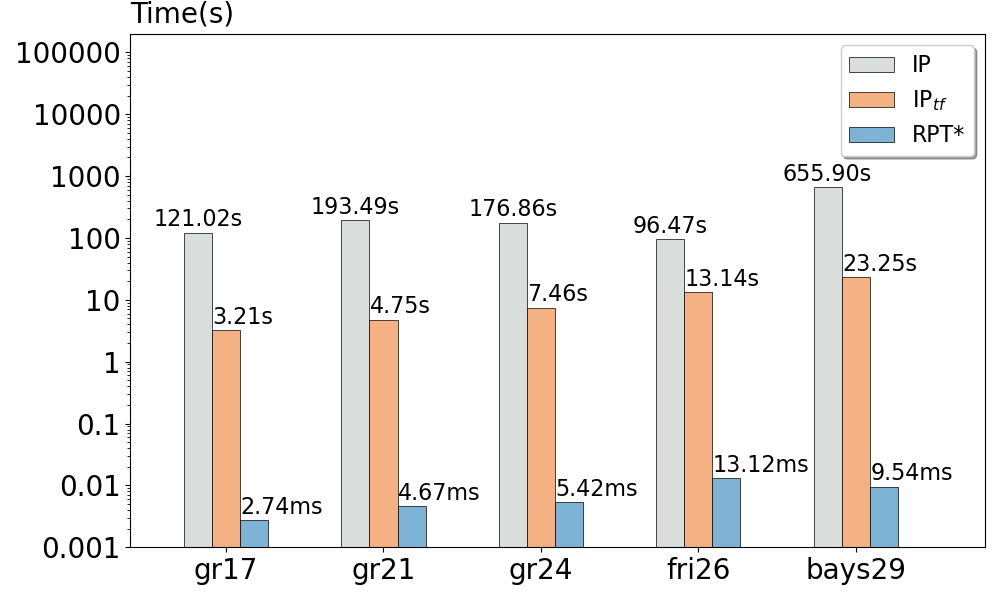}
    \caption{Average runtime comparison of IP and PRT* over five runs}
    \label{tsppt:fig:IP_time}
\end{figure}

\subsubsection{Runtime}
As shown in Fig.~\ref{tsppt:fig:IP_time}, the naive IP formulation requires hundreds of seconds, while the transformed IP formulation requires a few seconds, which is obviously smaller.
Furthermore, our \abbrRPT requires several milliseconds, about three orders of magnitude faster than the transformed IP formulation.

\subsubsection{Solution Quality}
Besides, we observe that both IP formulations often suffer from numerical errors of floating numbers and result in errors in the returned optimal solutions.
As shown in Table~\ref{tsppt:tab:IP_cost}, the IP formulations often find slightly more expensive solutions than \abbrRPT although the solver claims to solve the problem to optimality (when we configured the optimality gap to be zero).
Recall that, in \abbrTSPPT, due to the accumulated probability and products of floating numbers, such small errors can lead to big differences in the resulting solution paths.

For the rest of the experiments, we only compare our \abbrRPT against other approaches while omitting IP methods due to their large runtime and numerical errors.

\begin{table}[tb]
    \centering
    \caption{IP vs. RPT* solution cost comparison}
    \label{tsppt:tab:IP_cost}
    \begin{tabular}{l|c|c|c}
        \hline
        \textbf{Instance} & \textbf{IP Cost} & \textbf{RPT* Cost} & \textbf{RPT* Savings} (vs. IP) \\
        \hline
        gr17 & 16.1545 & 16.1473 & 0.045\% \\
        \hline
        gr21 & 13.9029 & 13.9024 & 0.004\% \\
        \hline
        gr24 & 9.4408 & 9.4400 & 0.008\% \\
        \hline
        fri26 & 13.2027 & 13.1946 & 0.061\% \\
        \hline
        bays29 & 12.9606 & 12.9526 & 0.062\% \\
        \hline
    \end{tabular}
\end{table}

\subsection{Ablation Study of \abbrRPT and \abbrBRPT}
This section presents the impact of the heuristic and focal search on \abbrRPT.
We ran tests on our randomly generated dataset. For each fixed number of vertices $N$, 20 instances were generated.
\subsubsection{Impact of Heuristic}
We first set the $h$-value to $0$ (denoted as $\text{\abbrRPT}_{noh}$) which is used as an uninformed baseline, and then compared its performance against the full algorithm (\abbrRPT) with the heuristic as described in Sec.~\ref{tsppt:subsec:heuristic}.
We compared their runtime and success rates within a time limit of 60 seconds per instance.
Fig.~\ref{tsppt:fig:heuristic} (a) illustrates the runtime efficiency.
Specifically, it is observed that the heuristic saves approximately 50\% to 70\% runtime compared to the uninformed baseline. 
Besides, as shown in Fig.~\ref{tsppt:fig:heuristic} (b), the heuristic function slightly improved the search success rate (when the number of vertices is 40) within the runtime limit.
\begin{figure}
    \centering
    \includegraphics[width=\linewidth]{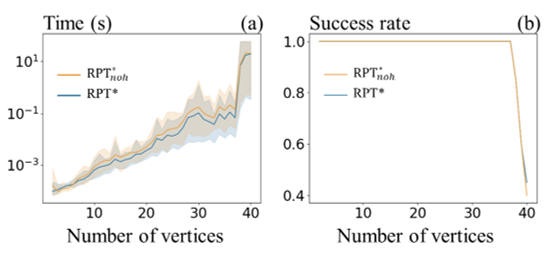}
    \caption{Impact of heuristic functions. (a) Runtime comparison. (b) Success rates comparison. For graphs with up to 40 vertices, both methods have almost the same success rates, while heuristic helps save 50\%-70\% runtime than the uninformed \abbrRPT.}
    \label{tsppt:fig:heuristic}
\end{figure}

\subsubsection{Runtime of \abbrBRPT}
\abbrRPT solves the problem to optimality, but can only handle graphs with around 40 vertices as we observed in the results.
In contrast, \abbrBRPT can handle much larger graphs.
As shown in Fig.~\ref{tsppt:fig:focal} (a), we assessed \abbrBRPT with graphs of 10 to 200 vertices with a step size of 10, with 5 instances generated for each size.
The runtime of \abbrBRPT slightly increases as the number of vertices increases.
To summarize, \abbrBRPT can handle graphs with up to about 200 vertices within a runtime limit of 60 seconds.

\subsubsection{Varying $\epsilon$ of \abbrBRPT}

\begin{figure}
    \centering
    \includegraphics[width = \linewidth]{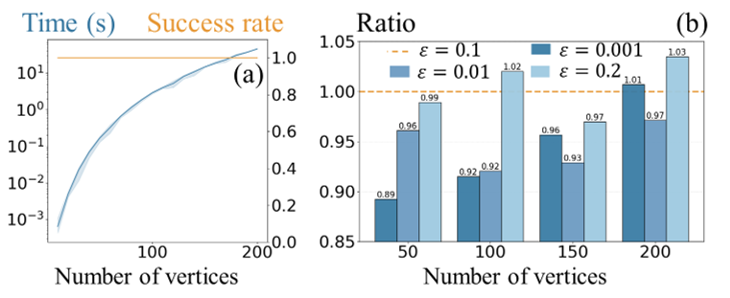}
    \caption{Impact of focal search. (a) Runtime and success rates of \abbrBRPT ($\epsilon = 0.01$). (b) Comparison of different $\epsilon$. Focal search helps \abbrRPT to solve problems with up to 200 vertices. A small $\epsilon$ can speed up the search and further increasing $\epsilon$ does not provide much additional reduction in the runtime.}
    \label{tsppt:fig:focal}
\end{figure}
We then tested \abbrBRPT with varying $\epsilon \in \{0.001,0.01,0.1,0.2\}$.
As shown in Fig.~\ref{tsppt:fig:focal} (b), varying $\epsilon$ does not affect the runtime very much.
The reason is that even if using a small epsilon (such as $\epsilon=0.01$), there can be many states that enter the focal list, which can speed up the search. 

\subsection{Comparison Against Baselines}
Here, we tested our \abbrRPT against two baselines in the synthetic dataset with up to 40 vertices.
For each graph size, 20 instances were generated.

\subsubsection{Baselines}
The first baseline uses a greedy strategy to solve \abbrTSPPT by letting the robot always move to a vertex $v$ in the graph with the largest probability value $p(v)$ among the unvisited vertices.
We refer to this baseline as Greedy.

The second baseline ignores the probability by treating the probability of all vertices simply as one.
As a result, \abbrTSPPT becomes a regular HPP and we use a popular solver LKH~\cite{helsgaun2000effective} to solve the resulting HPP.\footnote{LKH solves a TSP problem. An HPP can be converted to a TSP by making a copy $v_s'$ of the start vertex $v_s$ and adding a zero-cost edge between $v_s'$ and all other vertices. After obtaining the solution tour from LKH, one can remove $v_s'$ to obtain a solution to HPP.}
We refer to this baseline as LKH.

\subsubsection{Runtime and Solution Cost}
As depicted in Fig.~\ref{tsppt:fig:comparison}, while Greedy runs orders of magnitude faster than our \abbrRPT, its solution cost often doubles or triples the optimal cost returned by \abbrRPT (Fig.~\ref{tsppt:fig:comparison} (a)).
Similarly, LKH is faster than \abbrRPT but finds solutions that are 50-80\% more expensive than the optimal solution found by \abbrRPT.
The next subsection evaluates how such a gain in solution quality affects the robot's performance in target search.

\begin{figure}[tb]
    \centering
    \includegraphics[width=\linewidth]{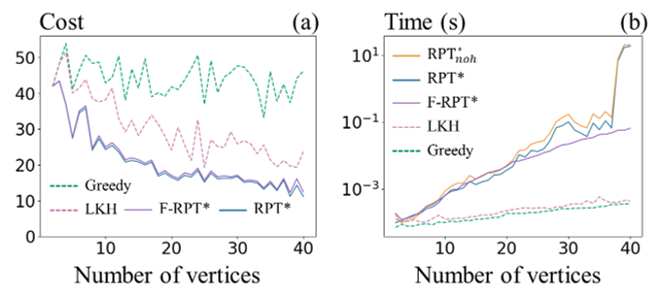}
    \caption{Comparison against baselines. (a) Solution cost comparison. (b) Runtime comparison. In both (a) and (b), dashed lines correspond to the baselines while solid lines are our approaches. Our \abbrRPT and \abbrBRPT runs slower than the baseline but find better quality solutions.}
    \label{tsppt:fig:comparison}
\end{figure}

\subsection{Lifelong Target Search with Bayesian Update}

\begin{figure}[tb]
    \centering
    \includegraphics[width=\linewidth]{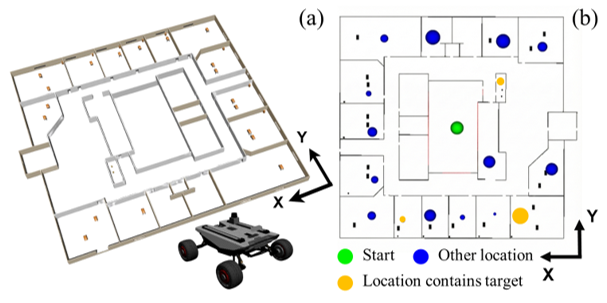}
    \caption{Experimental setup for lifelong target search. (a) The $80 \times 80$~m simulation environment and robot model in Gazebo. (b)~Pre-constructed map with possible target locations, where the three orange locations contain targets and the blue locations contain no target. The size of the blue or orange circle indicate the sampled probability values at each location. Larger probability corresponds to larger circle.}
    \label{tsppt:fig:HATS-L}
\end{figure}

We tested \abbrHATS-L in a $80 \times 80$ meter simulated environment in Gazebo with 13 possible target locations (vertices) while 3 of them indeed contain a target (Fig.~\ref{tsppt:fig:HATS-L}).
We pre-constructed a graph where edge weights are set as the path length returned by A* search between each pair of target locations in an occupancy grid.
The initial probability distribution of all vertices is randomly sampled and the same for all planners to be compared.

We used the parameters $\alpha_{1} = 0.8, \alpha_{2} = 0.4$ for the sensor observation model as described in Sec.~\ref{TSPPT:sec:HATS-L}.
We used the following termination condition for this ``lifelong'' test: a location is confirmed as ``target-present'' if its probability exceeds $p_{H}=0.98$, or ``target-absent'' if it falls below $p_{L}=0.15$.
Upon reaching either threshold, the vertex is removed from the search graph, and the search process terminates when the graph is reduced to a single vertex.
We use ``mission duration'' to denote the total runtime until the robot stops.

We compared our \abbrRPT against the aforementioned Greedy and LKH baselines.
The mission durations for \abbrRPT, Greedy and LKH are 1h29min, 2h7min, and 1h15min, respectively.
LKH achieves the smallest mission duration for the following reasons.
Note that, in this test, for any planner, it takes the same (and fixed) amount of visit to remove a vertex by claiming it as either target-present or target-absent.
LKH tends to minimize the traveled distance which therefore leads to shorter mission duration.

Besides, we observe that, for the same amount of time, our \abbrRPT can narrow down the possible locations of targets faster.
Fig.~\ref{tsppt:fig:Result-HATS-L} shows the probability distribution (i.e., the likelihood for a target locating at each vertex) after 30 minutes for all three planners, and the probability distribution computed by \abbrRPT clearly indicates that vertices $2,5$ contain a target which matches the truth, and two of those three targets are already claimed as target-present.
In contrast, the probability distribution computed by the LKH and Greedy baselines has no clear indication.
For example, LKH still has many vertices with high probabilities while Greedy has many vertices with low probabilities.

\begin{figure*}[tb]
	\centering
	\includegraphics[width=\textwidth]{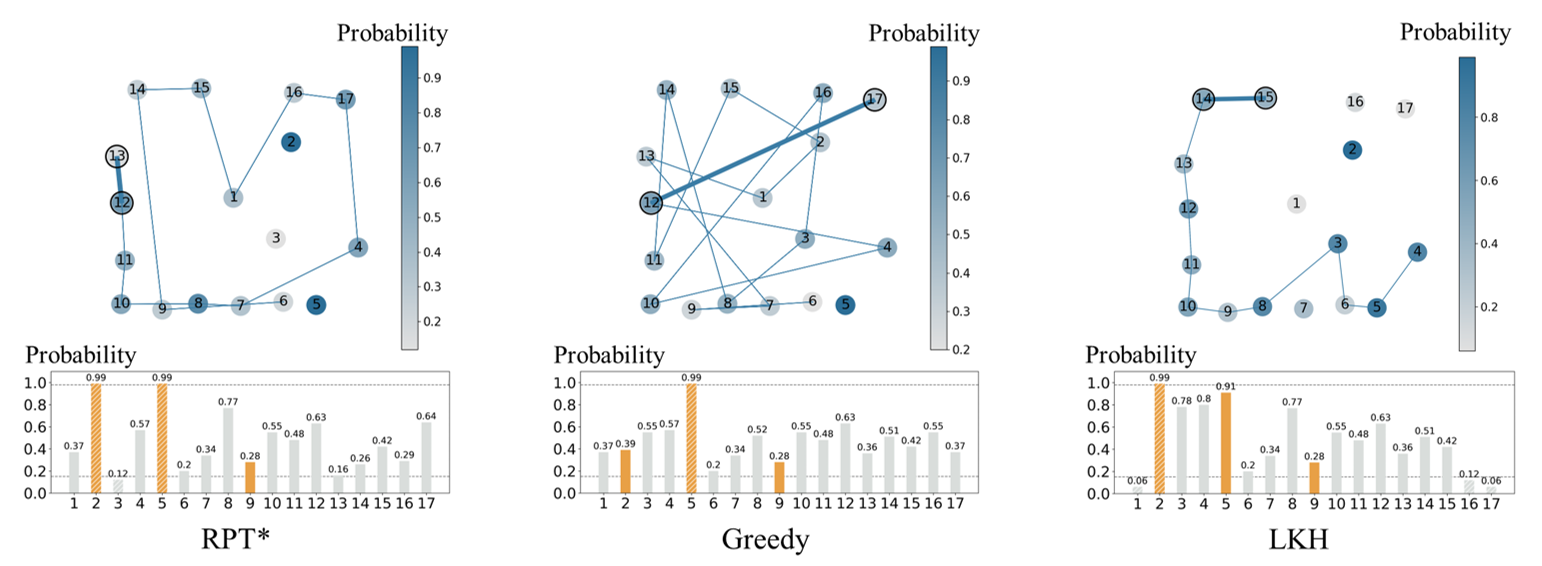}
	\caption{Snapshots of graph with probability distribution for different algorithms after running $t=30$ minutes. Vertices $2,5,9$ indeed contain a target which are marked orange in the bar plot. After 30 minutes, the probability distribution computed by \abbrRPT clearly indicates that vertices $2,5$ contain a target which matches the truth, and two of those three targets are already claimed as target-present. The other two baseline methods still have no clear indication, where the greedy only claims one target, while LKH still has high probability at many other target-absent vertices.}
    \label{tsppt:fig:Result-HATS-L}
\end{figure*}

\subsection{Simulation of Target Search in Unknown Environments}
\begin{figure}[tb]
    \centering
    \includegraphics[width=\linewidth]{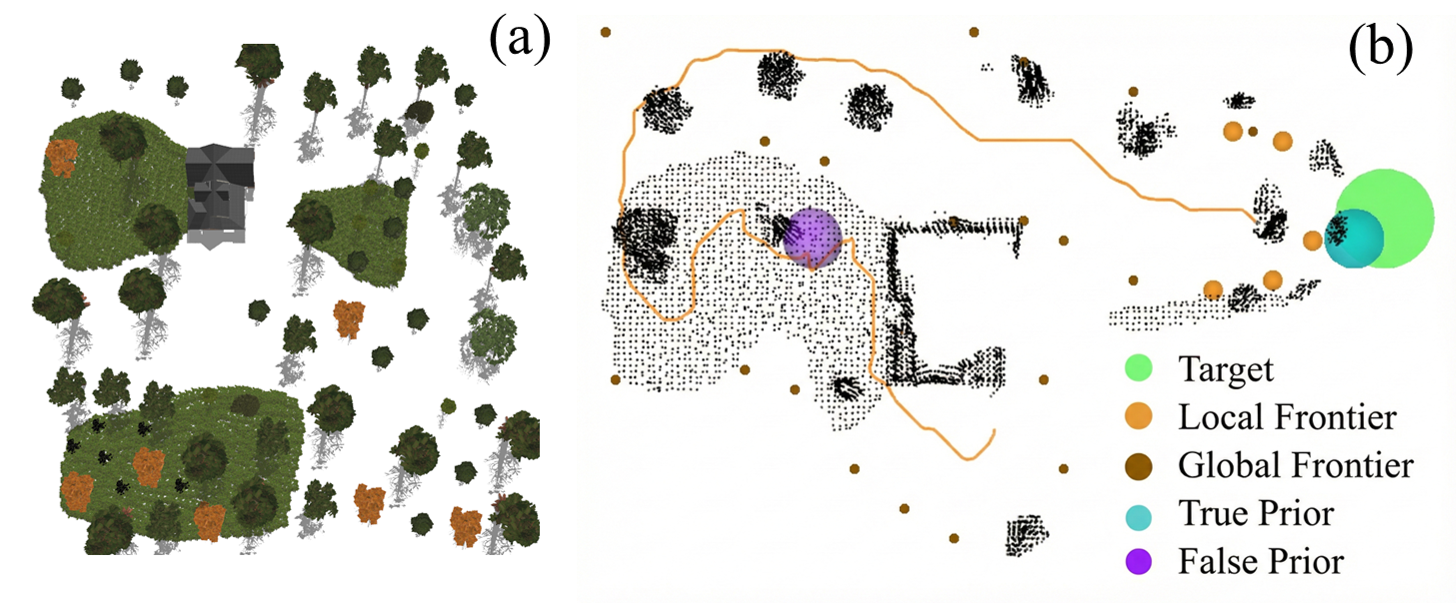}
    \caption{Experimental setup for target search in unknown environments. (a) The $101 \times 101$~m unknown forest environment in Gazebo. (b)~Visualization of the search process.}
    \label{tsppt:fig:HATS-U}
\end{figure}

As shown in Fig.~\ref{tsppt:fig:HATS-U} (a), we first evaluated the HATS-U system within a simulated unknown forest environment of size $101 \times 101$~m, where the robot autonomously explores and searches for the target following the procedures detailed in Sec.~\ref{TSPPT:sec:HATS-U}.
In this experiment, we consider only one target, and prior knowledge of its location is provided as a Gaussian distribution via $\phi_o$ as mentioned in Sec.~\ref{TSPPT:sec:HATS-U}.
We designed two different settings: (i) a scenario with accurate prior knowledge, where the peak of $\phi_o$ coincides with the true target location; and (ii) a scenario with misleading prior knowledge, where the prior knowledge is a mixture of two Gaussians, with one peak matching the target location and the other being far away from the true target location.
The exploration process terminates when the distance between the robot and the target is less than a threshold of $5$~m.
We tested our \abbrRPT, Greedy and LKH for comparison.

\begin{table}[tb]
    \centering
    \caption{Runtime comparison (with accurate prior)}
    \label{tsppt:tab:rviz_unknown}
    \begin{tabular}{l|c|c|c|c|c}
    \hline
    \multirow{2}{*}{\textbf{Method}} & \multicolumn{3}{c|}{\textbf{Trials ($s$)}} & \multirow{2}{*}{\textbf{Mean ($s$)}} & \multirow{2}{*}{\textbf{Var ($s^2$)}} \\
    \cline{2-4}
         & \textbf{1} & \textbf{2} & \textbf{3} & & \\
    \hline
    RPT* & 108 & 108 & 109 & 108.33 & 0.33 \\
    \hline
    Greedy & 87 & 86 & 85 & 86.00 & 1.00 \\
    \hline
    LKH & 179 & 229 & 356 & 254.67 & 8326.33 \\
    \hline
    \end{tabular}
\end{table}

\begin{table}[tb]
    \centering
    \caption{Runtime comparison (with misleading prior)}
    \label{tsppt:tab:rviz_unknown_false}
    \begin{tabular}{l|c|c|c|c|c}
    \hline
    \multirow{2}{*}{\textbf{Method}} & \multicolumn{3}{c|}{\textbf{Trials ($s$)}} & \multirow{2}{*}{\textbf{Mean ($s$)}} & \multirow{2}{*}{\textbf{Var ($s^2$)}} \\
    \cline{2-4}
          & \textbf{1} & \textbf{2} & \textbf{3} & & \\
    \hline
    RPT* & 114 & 114 & 113 & 113.67 & 0.33 \\
    \hline
    Greedy & 333 & 254 & 264 & 283.67 & 1850.33 \\
    \hline
    LKH & \multicolumn{3}{c|}{$>$500} &  $>$500 & / \\
    \hline
    \end{tabular}
\end{table}

As shown in Table~\ref{tsppt:tab:rviz_unknown}, in the scenario with accurate prior knowledge, Greedy achieves the shortest completion time, which is expected since the greedy approach will directly drive the robot to the target location that is known by the prior knowledge.
Greedy performs better than \abbrRPT, and both methods are obviously faster than LKH.
This shows the value of the prior knowledge in guiding the search process.
However, when facing misleading priors (Table~\ref{tsppt:tab:rviz_unknown_false}), Greedy is easily misled by the incorrect information, resulting in substantially increased completion time. \abbrRPT, on the other hand, yields the best performance in the presence of misleading prior knowledge.
This experiment shows that, our \abbrRPT can better balance between exploitation (move greedily towards high probability region like the Greedy baseline) and exploration (uniformly search all possible areas like the LKH baseline), and thus find target objects more quickly even in the presence of misleading prior knowledge.

\subsection{Real-Robot Target Search in Unknown Environments}
We further validated the HATS-U system through deployment in real-world scenarios. To demonstrate the system's robustness across different platforms, experiments were conducted in two distinct environments:
\begin{itemize}
\item Corridor Setting: A spacious corridor area of size $40 \times 30$~m, whose point cloud map was shown in Fig.~\ref{tsppt:fig:Result-HATS-U-Dog}. The target was a bag, and the mission was executed by a quadrupedal robot.
\item Laboratory Setting: A $30 \times 20$~m confined, cluttered laboratory environment, whose point cloud map is shown in Fig.~\ref{tsppt:fig:intro}.
The designated target was a trash bin, and the search task was executed by a wheeled mobile robot.
\end{itemize}
For perception, we employed the YOLO~\cite{yolo11_ultralytics} object detector. The model was trained on a custom dataset that was manually collected and annotated.

In both scenarios, the robot successfully completed the search tasks with completion times of 152 and 104 seconds, respectively. The search processes are visualized in Fig.~\ref{tsppt:fig:Result-HATS-U-Dog} and Fig.~\ref{tsppt:fig:Result-HATS-U-Car}.
These snapshots illustrate the real-time evolution of the occupancy grid and the corresponding camera views during the target search in the Corridor and Laboratory settings, respectively.
Different from exploration which often uses a $360^\circ$ Field Of View (FOV) Lidar for mapping, our target search uses a camera with limited FOV.
As a result, the robot has to swing left and right to find the target during its motion using the camera.

Additionally, the YOLO software we use can return a confidence score between $[0,1]$ for each object detection.
We set the following termination condition during the real robot tests.
The robot stops if, within a sliding time window of $2$ seconds, more than $60\%$ of the detections have confidence scores exceeding $0.6$.
By doing so, the robot can reliably detect the object in practice.
Too small a threshold or too short a period may lead to false detection and terminate the robot motion earlier.
On the flip side, YOLO may never detect the object and the robot may never terminate if we set too high a threshold or too long a period.
More details can be found in our video attachment.

\begin{figure}[tb]
	\centering
	\includegraphics[width=0.92\linewidth]{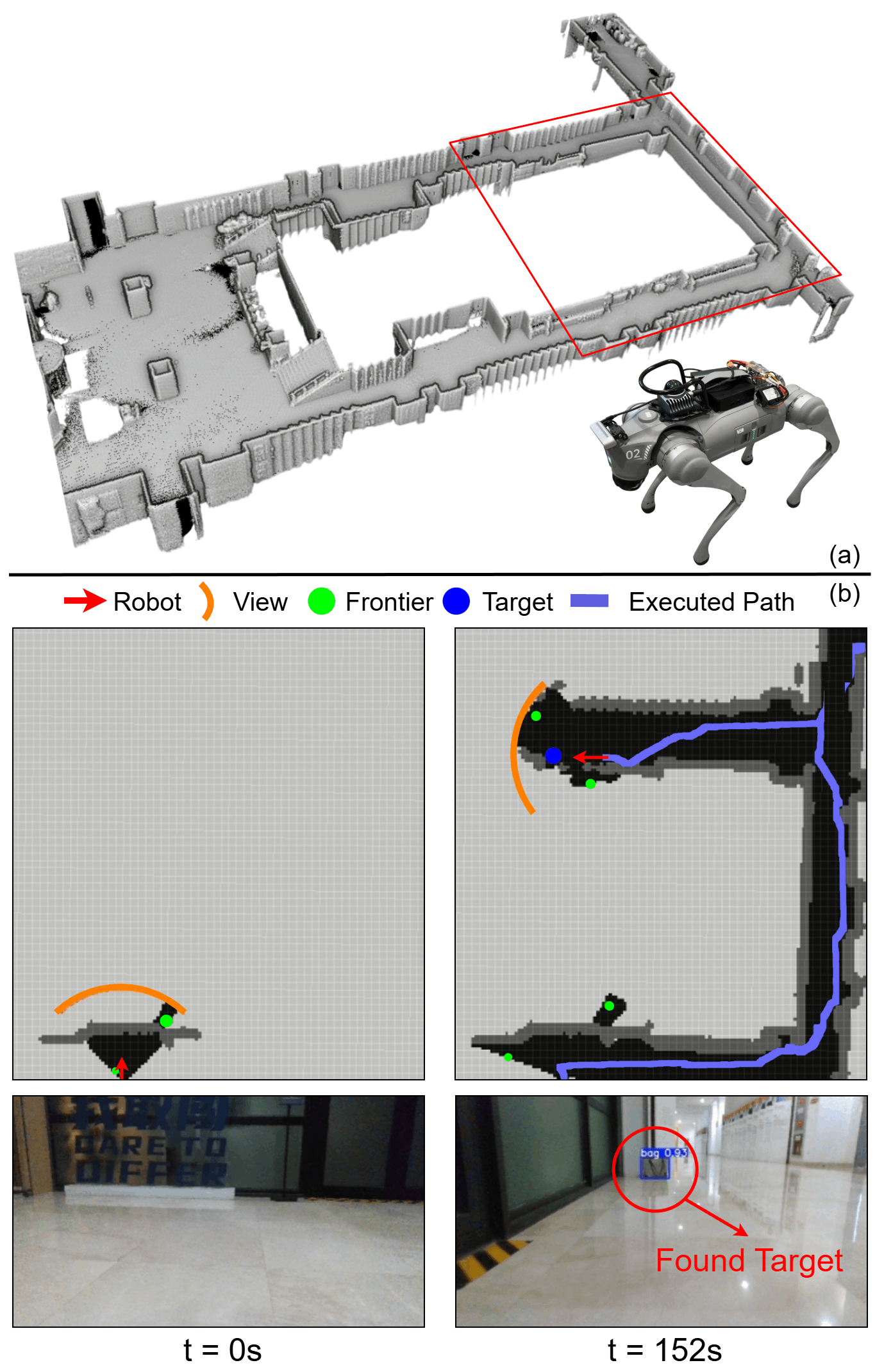}
	\caption{Experiment of HATS-U in a corridor setting using a quadruped robot. (a) Panoramic view of the environment and the robot. (b) Snapshots of the occupancy grid and camera image during the search process.}
    \label{tsppt:fig:Result-HATS-U-Dog}
\end{figure}

\begin{figure}[tb]
	\centering
    \includegraphics[width=0.92\linewidth]{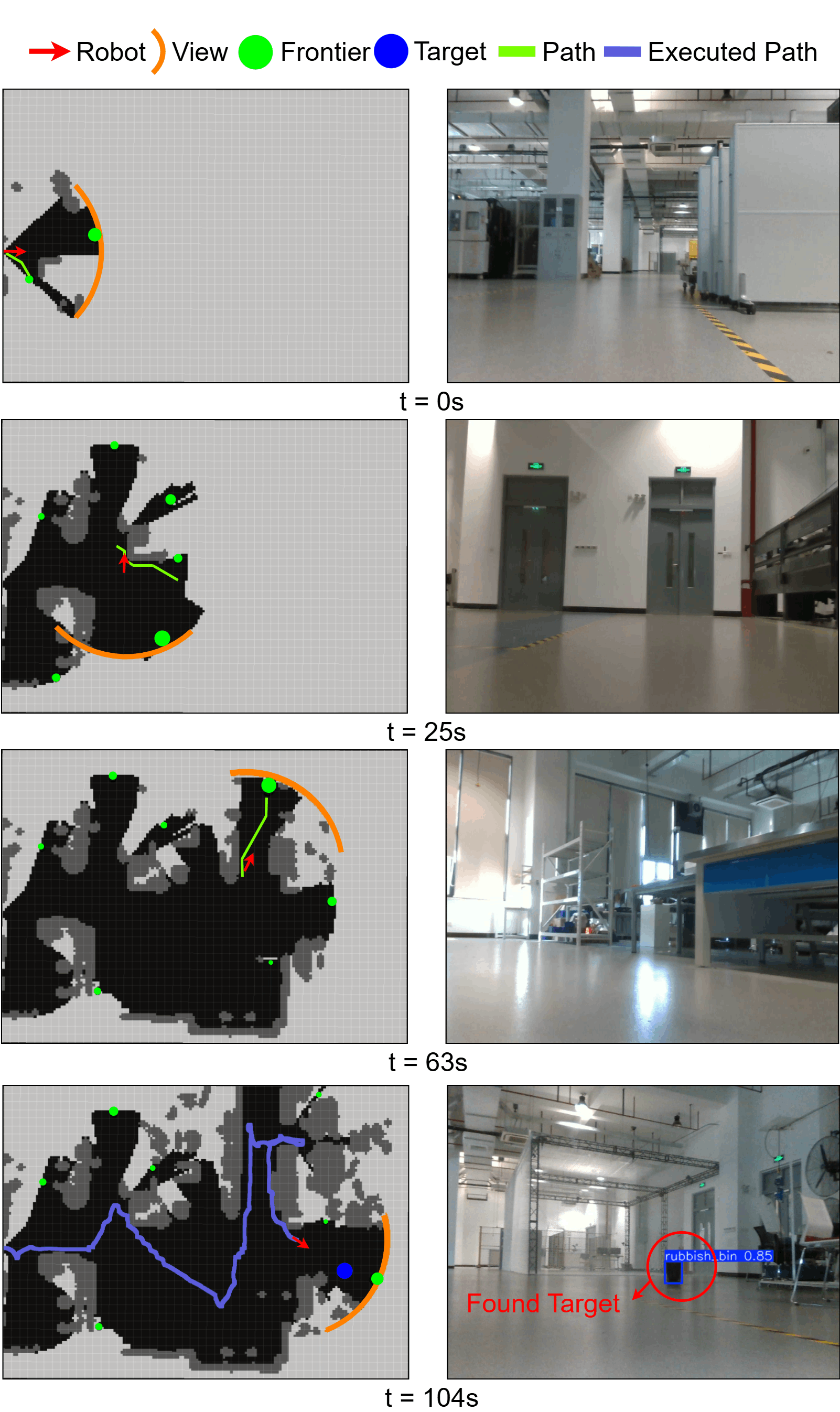}
	\caption{Snapshots of the occupancy grid and camera image during the target search process using the wheeled robot in a lab setting.}
    \label{tsppt:fig:Result-HATS-U-Car}
\end{figure}
	
	\section{Conclusion and Future Work}\label{TSPPT:sec:conclusion}
 	In this paper, we introduced the Hamiltonian Path Problem with Probabilistic Terminals (\abbrTSPPT), a new variant of the \abbrTSP motivated by autonomous robot target search task.
To address the problem, we proposed several algorithms including Integer Programming, and heuristic search Routing with Probabilistic Terminals Star (\abbrRPT) and its bounded sub-optimal version.
Extensive tests on TSPLIB and synthetic datasets demonstrated that \abbrRPT outperforms baseline methods.
Furthermore, we integrated these algorithms into a system, Hierarchical Autonomous Target Search (\abbrHATS), capable of handling both lifelong search in known environments (\abbrHATS-L) and exploratory search in unknown environments (\abbrHATS-U).
The system was validated through simulations and physical deployment on both wheeled and quadrupedal robots.
Our \abbrRPT planner is able to balance between exploitation and exploration during the target object search.

For future work, we plan to extend this research from single-robot to multi-robot.
Besides, we can leverage semantic mapping techniques~\cite{ginting2024seek,app10020497} to extract probability of vertices based on semantic information.
Finally, our \abbrRPT has the potential to be extended to track moving targets by incorporating a probability transition model in the Bayesian filter framework when targets can move.

\bibliographystyle{IEEEtran}
\bibliography{arxiv}

\end{document}